\documentclass{article}

\usepackage{microtype}
\usepackage{graphicx}
\usepackage{subcaption}
\usepackage{booktabs} 
\usepackage{float}

\usepackage{hyperref}

\usepackage[accepted]{icml2026}

\usepackage{amsmath}
\usepackage{amssymb}
\usepackage{mathtools}
\usepackage{amsthm}
\usepackage[misc]{ifsym}

\usepackage[capitalize,noabbrev]{cleveref}

\theoremstyle{plain}
\newtheorem{theorem}{Theorem}[section]

\newtheorem{lemma}[theorem]{Lemma}

\theoremstyle{definition}

\theoremstyle{remark}

\usepackage[disable,textsize=tiny]{todonotes}

\icmltitlerunning{Semantic Cache Distillation: Efficient State Transfer via Reuse and Selective Patching}

\begin{document}

\twocolumn[
  \icmltitle{Semantic Cache Distillation:\\ Efficient State Transfer via Reuse and Selective Patching}



  \begin{icmlauthorlist}
    \icmlauthor{Qianli Ma}{fas,iaifn}
    \icmlauthor{Zhiqing Tang$^{\textrm{\Letter}}$}{iaifn}
    \icmlauthor{Hanshuai Cui}{sai,iaifn}
    \icmlauthor{Zhi Yao}{sai,iaifn}
    \icmlauthor{Weijia Jia$^{\textrm{\Letter}}$}{iaifn,keylab}
  \end{icmlauthorlist}

  \icmlaffiliation{fas}{Faculty of Arts and Science, Beijing Normal University, Zhuhai 519087, China}
  \icmlaffiliation{iaifn}{Institute of Artificial Intelligence and Future Networks, Beijing Normal University, Zhuhai 519087, Guangdong, China}
  \icmlaffiliation{sai}{School of Artificial Intelligence, Beijing Normal University, Beijing 100875, China}
  \icmlaffiliation{keylab}{Guangdong Key Lab of AI and Multi-Modal Data Processing, Beijing Normal-Hong Kong Baptist University, Zhuhai 519087, China}

  \icmlcorrespondingauthor{Zhiqing Tang}{zhiqingtang@bnu.edu.cn}
  \icmlcorrespondingauthor{Weijia Jia}{jiawj@bnu.edu.cn}

  \icmlkeywords{Large Language Models, Disaggregated Inference, KV Cache Compression, Semantic Cache Distillation, Efficient Serving, Distributed Systems}

  \vskip 0.3in
]



\printAffiliationsAndNotice{}  

\begin{abstract}
Disaggregated serving alleviates memory bottlenecks in Large Language Model (LLM) inference but creates a severe communication bottleneck: transmitting high-dimensional Key-Value (KV) caches often dominates time-to-first-token (TTFT). Moreover, reusing caches across heterogeneous models (e.g., base and fine-tuned variants) causes semantic misalignment that accumulates over layers, degrading generation quality. We propose Semantic Cache Distillation (SCD), a loss-constrained framework that replaces raw KV transmission with compact semantic codes. SCD addresses these challenges via two mechanisms: (1) \textsc{Reuse}, which reconstructs most layers from low-rank subspaces to minimize transfer cost, and (2) \textsc{Patch}, which predicts normalized inputs at sparse transition layers to truncate error propagation. Empirically, SCD delivers up to 2.65$\times$ TTFT speedup over the oracle consumer prefill and dominates quantization and selective recomputation baselines on the quality--latency Pareto frontier in bandwidth-constrained regimes, while keeping generation quality within 5\% F1 of the oracle.
\end{abstract}

\section{Introduction}
\label{sec:intro}


LLM serving is increasingly constrained by memory bandwidth and communication overheads rather than compute intensity. In decoder-only Transformers, autoregressive decoding reuses per-layer KV caches to avoid recomputing attention; however, these caches grow linearly with context length and depth. Modern systems therefore disaggregate inference into a compute-bound prefill stage and a memory-bound decode stage, placing them on separate devices~\citep{qin2024mooncake,zhong2024distserve,patel2024splitwise}. While advanced schedulers~\citep{agrawal2024taming} and memory managers~\citep{kwon2023vllm} address local efficiency, disaggregation introduces a critical network bottleneck: data transfer between the prefill producer and the decode consumer can dominate TTFT, especially over slow interconnects.


Prior work primarily focuses on compressing the KV cache footprint. Techniques include quantization to compress states into low precision~\citep{liu2024kivi,hooper2024kvquant}, sparsification to evict less salient tokens~\citep{zhang2023h2o,li2024snapkv,tang2024quest,cai2024pyramidkv}, and low-rank methods that exploit redundancy in Transformer representations. Systems like CacheGen~\citep{liu2024cachegen} further optimize the streaming of these compressed states. However, these existing approaches typically assume the producer and consumer share the same latent space (i.e., identical weights). They rely on static compression policies that ignore the representation discrepancies inherent in cross-model adaptation.


SCD targets the narrower but practical case of producer-consumer pairs that share the same Transformer architecture while differing in weights. It is not intended to replace same-model KV reuse, where raw or quantized caches can already be reused directly. Instead, SCD addresses shared-architecture, weight-mismatched serving, such as a generic base model serving as the producer for fine-tuned specialists~\citep{sheng2024slora,chen2024punica} or a draft/verifier pair~\citep{cai2024medusa,li2024eagle}.

A representative deployment is \emph{shared-prefill, specialized-decode} serving. For example, an online service may run a shared producer on a long common prefix, such as user history, documents, instructions, or a task context, and then route the resulting state to task-specialized consumers for tutoring, code assistance, safety filtering, or domain-specific response generation. The producer and consumers are intentionally not identical: specialization is the purpose of the back-end models. Once their weights differ, however, the producer's raw KV states are no longer natively compatible with the consumer. This setting presents two coupled challenges: (1) \textbf{Transmission Overhead:} Raw KV transfer is bandwidth-bound in disaggregated serving, so any practical system must shrink the per-request payload while preserving the consumer's distribution. (2) \textbf{Semantic Drift:} Weight discrepancies cause representation mismatches that compound through layers; direct reuse degrades quality, while full recomputation negates the latency benefits of disaggregation. Existing approaches~\citep{liu2024droidspeak} attempt to mitigate this by selectively recomputing layers or sharing only specific pairs. However, these methods enforce a rigid trade-off: prioritizing reuse risks quality, while prioritizing recomputation sacrifices latency. A practical solution must simultaneously minimize transmission costs and correct drift under a fixed online execution path.

This deployment pattern is attractive precisely because the expensive prefix computation can be amortized across a family of specialized consumers. Synchronizing all endpoints to identical weights would remove the specialization that motivates the system, while colocating every specialized model with the producer would duplicate memory and undermine disaggregation. The relevant question is therefore not whether same-model KV reuse can work, but how to transfer state when architectural compatibility exists while the consumer's weight space is different.

\begin{figure*}[t]
  \centering
  \includegraphics[width=0.98\textwidth]{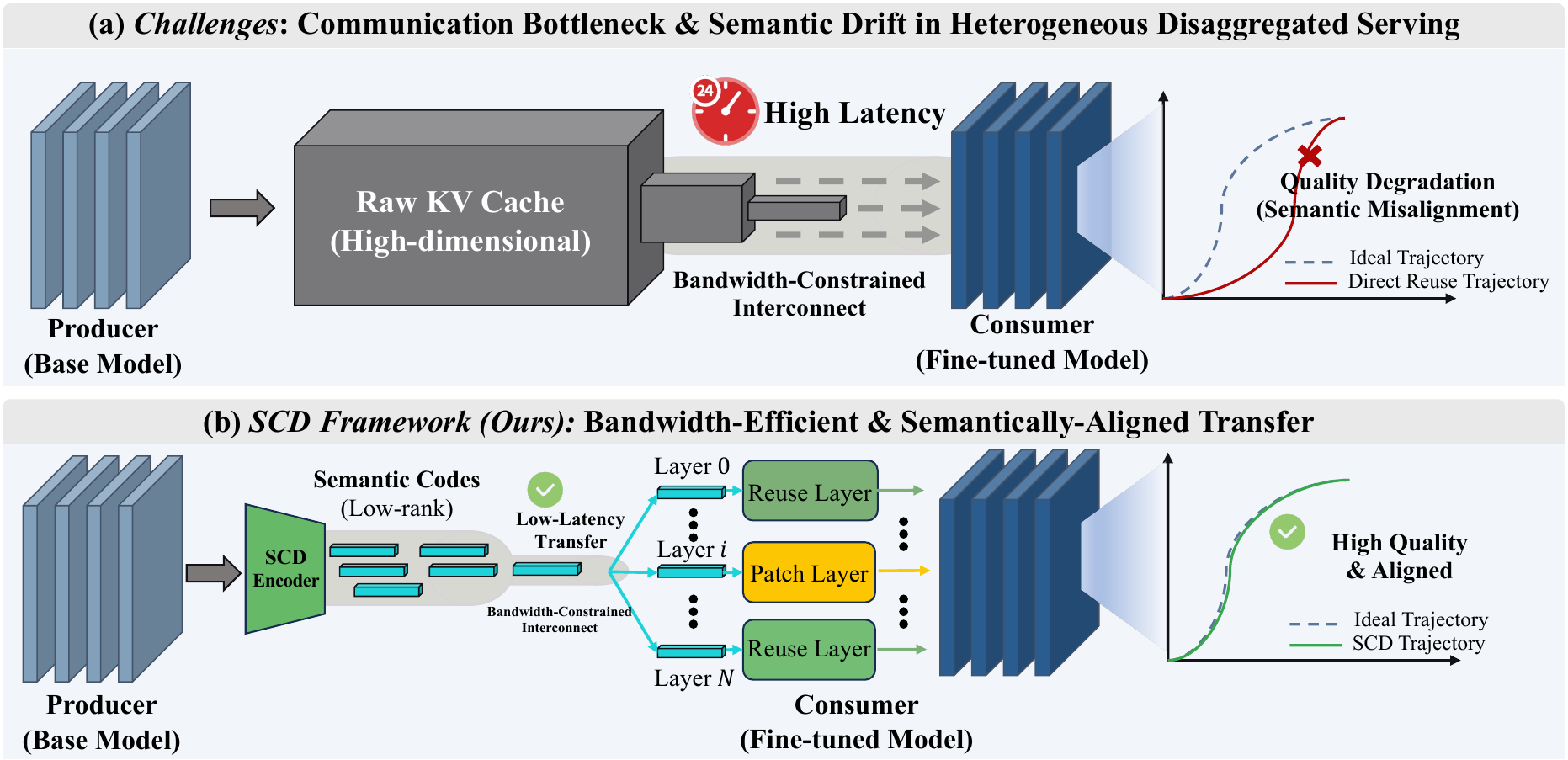}
  \caption{Overview of SCD. (a) Challenges in heterogeneous disaggregated serving: Transmitting raw KV caches creates a communication bottleneck, and directly reusing caches from a base model (Producer) to a fine-tuned model (Consumer) causes semantic drift that degrades quality. (b) SCD Framework: We replace raw KV transmission with compact semantic codes. The Consumer reconstructs states using \textsc{Reuse} (low-rank projection) for efficiency and applies \textsc{Patch} at sparse transition layers to rectify semantic misalignment, achieving low latency and high generation quality.}
  \label{fig:scd_overview2}
\end{figure*}

We propose \textbf{Semantic Cache Distillation (SCD)}, a bandwidth-efficient state-transfer framework that replaces raw KV transmission with compact semantic codes for shared-architecture, weight-mismatched producer-consumer pairs. SCD distills high-dimensional states at the producer and reconstructs consumer-aligned states via lightweight, layer-aware translators. SCD integrates two complementary mechanisms: (1) \textbf{\textsc{Reuse}:} Leveraging the high cross-model compatibility of most layers, SCD reconstructs states via fast low-rank projections to minimize bandwidth usage. (2) \textbf{\textsc{Patch}:} For the few critical layers that induce significant drift, SCD predicts normalized pre-attention inputs to truncate error propagation without full recomputation. In summary, we make the following contributions:
\begin{itemize} 
    \item \textbf{SCD Framework.} We propose an end-to-end design that integrates \textsc{Reuse} for bandwidth efficiency and \textsc{Patch} for semantic alignment, enabling low-latency transfer between differing models. 
    \item \textbf{Heterogeneous State Transfer Formulation.} We formalize the problem of cache \textsc{Reuse} under representation mismatch, identifying why raw transfer and static compression fail in disaggregated settings. 
    \item \textbf{Selective Correction Mechanism.} We introduce \textsc{Patch}, a lightweight module applied at sparse transition layers that truncates error propagation with minimal computational overhead. 
    \item \textbf{Empirical Effectiveness.} We demonstrate that SCD achieves superior latency-quality trade-offs in bandwidth-limited environments, delivering up to 2.65$\times$ TTFT speedup over the oracle consumer prefill while maintaining generation quality and outperforming quantization and DroidSpeak-style recomputation baselines on the Pareto frontier.
\end{itemize}

\paragraph{Conflict of Interest Disclosure.}
The authors declare no financial conflicts of interest related to this work.

\section{Related Work}
\label{sec:related}

Our work intersects with disaggregated LLM serving, efficient KV cache management, and cross-model alignment.

\paragraph{Disaggregated Serving and the Transfer Bottleneck.}
Modern serving systems separate \emph{prefill} and \emph{decode} phases across distinct workers to maximize utilization~\citep{zhong2024distserve,qin2024mooncake,patel2024splitwise}.
While this architecture improves throughput, it places KV-cache transfer on the critical path of time-to-first-token (TTFT).
Advanced schedulers like Sarathi-Serve~\citep{agrawal2024taming} and FastServe~\citep{wu2023fast} use chunk-level scheduling to mitigate stalls, while engines like vLLM~\citep{kwon2023vllm} and SGLang~\citep{zheng2024sglang} optimize memory fragmentation.
Recent transport-layer works, such as FlowKV~\citep{li2025flowkv}, optimize low-latency KV-cache transfer and load-aware scheduling.
However, these system-level optimizations largely treat transferred states as \emph{system-level payloads} rather than learning cross-model semantic translators.
Our work complements these efforts by optimizing the \emph{payload}: we target the producer-to-consumer handoff, particularly when raw transfer between heterogeneous models becomes bandwidth-prohibitive.

\paragraph{Intra-Model KV Compression.}
Standard techniques to reduce cache footprint include quantization methods like KIVI~\citep{liu2024kivi} and KVQuant~\citep{hooper2024kvquant}.
Beyond quantization, pruning strategies such as H2O~\citep{zhang2023h2o}, SnapKV~\citep{li2024snapkv}, Quest~\citep{tang2024quest}, and PyramidKV~\citep{cai2024pyramidkv} evict less salient tokens, while StreamingLLM~\citep{xiao2023efficient} uses attention sinks for infinite-length inference.
Systems like CacheGen~\citep{liu2024cachegen} further optimize the streaming transmission of compressed states.
Crucially, these approaches are inherently \emph{intra-model}: they assume the producer and consumer share identical weights and feature spaces.
Applying them directly to heterogeneous pairs fails to bridge the semantic gap caused by weight discrepancies.
In contrast, SCD is designed for the \emph{inter-model} setting, mapping producer states into the consumer's native space via learnable codes.

\paragraph{Cross-Model Cache Reuse.}
Serving heterogeneous models with a shared backbone is a growing challenge.
Systems like S-LoRA~\citep{sheng2024slora} and Punica~\citep{chen2024punica} enable scalable serving of LoRA adapters, while Prompt Cache~\citep{gim2024prompt} explores attention reuse across requests.
For reuse between base and fine-tuned models, \citet{liu2024droidspeak} show that naive sharing degrades generation quality.
Their system, DroidSpeak, mitigates this by selectively recomputing critical layers while reusing raw caches for the rest.
Unlike DroidSpeak, which makes a binary decision (reuse raw vs.\ recompute), SCD advances from \emph{selection} to \emph{transformation}.
By employing loss-constrained reconstruction (Reuse) and targeted correction (Patch), SCD achieves higher compression than raw reuse and lower latency than recomputation, smoothing the trade-off frontier.

\paragraph{Orthogonal Acceleration Approaches.}
SCD focuses on state transfer and is orthogonal to decoding-stage optimizations.
Techniques like CoDec~\citep{wang2025codec} accelerate prefix-shared decoding kernels, while speculative decoding frameworks like Medusa~\citep{cai2024medusa} and EAGLE~\citep{li2024eagle} generate multiple tokens per step using draft models.
SCD is compatible with these methods: once the cache is transferred and reconstructed, the consumer can employ speculative sampling or optimized kernels for generation.

\paragraph{Distinction from Semantic Communication.}
Finally, we distinguish SCD from Semantic Communication or Cache-to-Cache (C2C) frameworks~\citep{fu2026c2c}.
C2C approaches typically use KV caches to fuse information from multiple agents to enhance collaborative generation.
In such settings, the receiver integrates external signals into its own context.
Conversely, our goal is acceleration: we enable the consumer to \emph{skip} its prefill phase entirely.
Adopting C2C-style fusion would require the consumer to first compute a local cache, negating the latency benefits of disaggregation.
Thus, SCD serves as a specialized transfer protocol for latency-critical serving rather than a general collaboration mechanism.

\section{Problem Formulation}
\label{sec:problem}

\paragraph{Disaggregated Serving and the Bandwidth Bottleneck.}
Modern LLM systems disaggregate \emph{prefill} and \emph{decode} phases across devices to maximize utilization~\cite{zhong2024distserve,qin2024mooncake}.
We consider a setup with a \emph{producer} model $\mathcal{M}_A$ (Device A) and a \emph{consumer} model $\mathcal{M}_B$ (Device B), both processing a prefix $x_{1:T}$.
To generate tokens, the consumer $\mathcal{M}_B$ requires layer-wise KV caches $\mathcal{C}_B = \{(K_B^\ell, V_B^\ell)\}_{\ell=1}^L$.
Device B faces a dilemma: it must either \emph{recompute} these states locally (compute-bound) or \emph{fetch} them from Device A (bandwidth-bound).
Since the cache size $\|\mathcal{C}_B\|$ scales linearly with context length $T$ and depth $L$, raw transmission often dominates end-to-end latency under limited interconnect bandwidth.

\paragraph{Heterogeneity and Semantic Drift.}
Reuse becomes challenging when producer and consumer models are heterogeneous (e.g., base vs.\ fine-tuned, or draft vs.\ verifier).
Because their weights differ, their internal representations lie in distinct feature spaces.
Directly substituting the producer's cache $\mathcal{C}_A$ for $\mathcal{C}_B$ causes \emph{semantic drift}---a representation mismatch that compounds over layers.
Existing methods typically mitigate this by partially recomputing specific layers, which forces a rigid trade-off between transfer cost and computation overhead~\cite{liu2024droidspeak}.

\paragraph{Loss-Constrained State Transfer.}
We formulate cross-model cache reuse as a rate-distortion problem.
Our goal is to transfer compact \emph{semantic codes} from $\mathcal{M}_A$ to reconstruct an \emph{effective} cache for $\mathcal{M}_B$.
We define a producer-side encoder $\phi$ and a consumer-side decoder $\psi$ operating on source states $\mathbf{S}_A$ (e.g., hidden states):
\begin{align}
\mathbf{Z} &= \phi(\mathbf{S}_A), \quad \mathbf{Z} \in \mathbb{R}^{T \times r}, \\
\hat{\mathbf{S}}_B &= \psi(\mathbf{Z}),
\end{align}
where $r \ll d$ is the rank of the code, and $\hat{\mathbf{S}}_B$ is used to compute the approximated cache $\hat{\mathcal{C}}_B$.

We minimize the total transfer latency $\mathcal{T}_{\mathrm{transfer}}$, which sums transmission, reconstruction, and residual computation time:
\begin{align}
\min_{\phi,\psi} \quad
\mathcal{T}_{\mathrm{transfer}}
~=~
\frac{|\mathbf{Z}|}{\mathrm{BW}}
~+~
\mathcal{T}_{\mathrm{recon}}(\psi, \mathbf{Z})
~+~
\mathcal{T}_{\mathrm{fill}},
\label{eq:ttft_obj}
\end{align}
where $\mathrm{BW}$ is the channel bandwidth, $|\mathbf{Z}|$ is the code size, and $\mathcal{T}_{\mathrm{fill}}$ accounts for any layers explicitly recomputed rather than reconstructed (in SCD, $\mathcal{T}_{\mathrm{fill}}=0$ since Reuse and Patch jointly cover all layers).

Crucially, we constrain the consumer's generation quality to remain within an $\epsilon$-margin of the oracle:
\begin{align}
\mathbb{E}_{x_{1:T} \sim \mathcal{D}}
\Big[
D_{\mathrm{KL}}\Big(
P_{\mathcal{M}_B}(\cdot \mid \mathcal{C}_B)
~\Big\|~
P_{\mathcal{M}_B}(\cdot \mid \hat{\mathcal{C}}_B)
\Big)
\Big]
\le \epsilon,
\label{eq:quality_constraint}
\end{align}
where $P_{\mathcal{M}_B}$ is the next-token distribution.
SCD learns layer-wise projections $(\phi, \psi)$ to minimize Eq.~\eqref{eq:ttft_obj} while satisfying Eq.~\eqref{eq:quality_constraint}.

\section{Method}
\label{sec:method}

We present \textbf{SCD}, a framework for split inference between a producer $\mathcal{M}_A$ and a consumer $\mathcal{M}_B$ that share a Transformer architecture but differ in weights.
Rather than transmitting raw high-dimensional states, the producer sends compact \emph{semantic codes}, from which the consumer reconstructs \emph{native-space} states.
SCD integrates two complementary mechanisms:
(i) \textbf{Reuse}---fast low-rank reconstruction of KV caches for the majority of layers; and
(ii) \textbf{Patch}---targeted semantic correction at sparse transition layers to truncate error propagation.

\subsection{Notation and Preliminaries}
We index Transformer layers by $\ell \in \{1, \dots, L\}$.
Let $T$ denote the prefix length, $d_h$ the per-head dimension, and $d_{\text{model}}$ the model hidden size.
We represent matrices in bold uppercase (e.g., $\mathbf{K}, \mathbf{V}$) and vectors/scalars in lowercase.
The consumer's cache for layer $\ell$ consists of multi-head tensors $(\mathbf{K}_B^\ell, \mathbf{V}_B^\ell)$.
When applying encoders/decoders, we flatten the $\text{batch} \times \text{heads} \times \text{tokens}$ dimensions into a sample axis $N$.
Let $\mathcal{L}_{\mathrm{patch}} \subseteq \{1, \dots, L\}$ be the set of layers using Patch, and $\mathcal{L}_{\mathrm{reuse}} = \{1, \dots, L\} \setminus \mathcal{L}_{\mathrm{patch}}$ be the remaining layers that use Reuse.
The patch budget is $k = |\mathcal{L}_{\mathrm{patch}}|$; unless otherwise specified, we use $k{=}6$.
We select $\mathcal{L}_{\mathrm{patch}}$ once during offline calibration and keep it fixed for all online requests.
We employ branch-specific ranks $r_K, r_V, r_H \ll d_h, d_{\text{model}}$.
Keys utilize Rotary Positional Embeddings (RoPE); we denote $\operatorname{deRoPE}(\cdot)$ and $\operatorname{RoPE}(\cdot)$ as the inverse and forward operations.
Crucially, we compress Key, Value, and Hidden states independently to preserve channel-wise semantics.
Offline calibration uses a small prefix set $\mathcal{D}_{\mathrm{cal}}$ (typically 100--500 representative prefixes).
For each prefix $x_{1:T}\in\mathcal{D}_{\mathrm{cal}}$, we run full prefills on both $\mathcal{M}_A$ and $\mathcal{M}_B$ under identical tokenization and attention masks, then record paired KV tensors and normalized pre-attention hidden states for every layer.
These paired traces are used to fit Reuse translators via linear low-rank regression and Patch aligners via supervised regression before deployment.
At deployment time, the learned artifacts are loaded as a static routing table keyed by layer index, and the serving path performs no per-request layer search.

\begin{figure*}[t]
  \centering
  \includegraphics[width=0.98\textwidth]{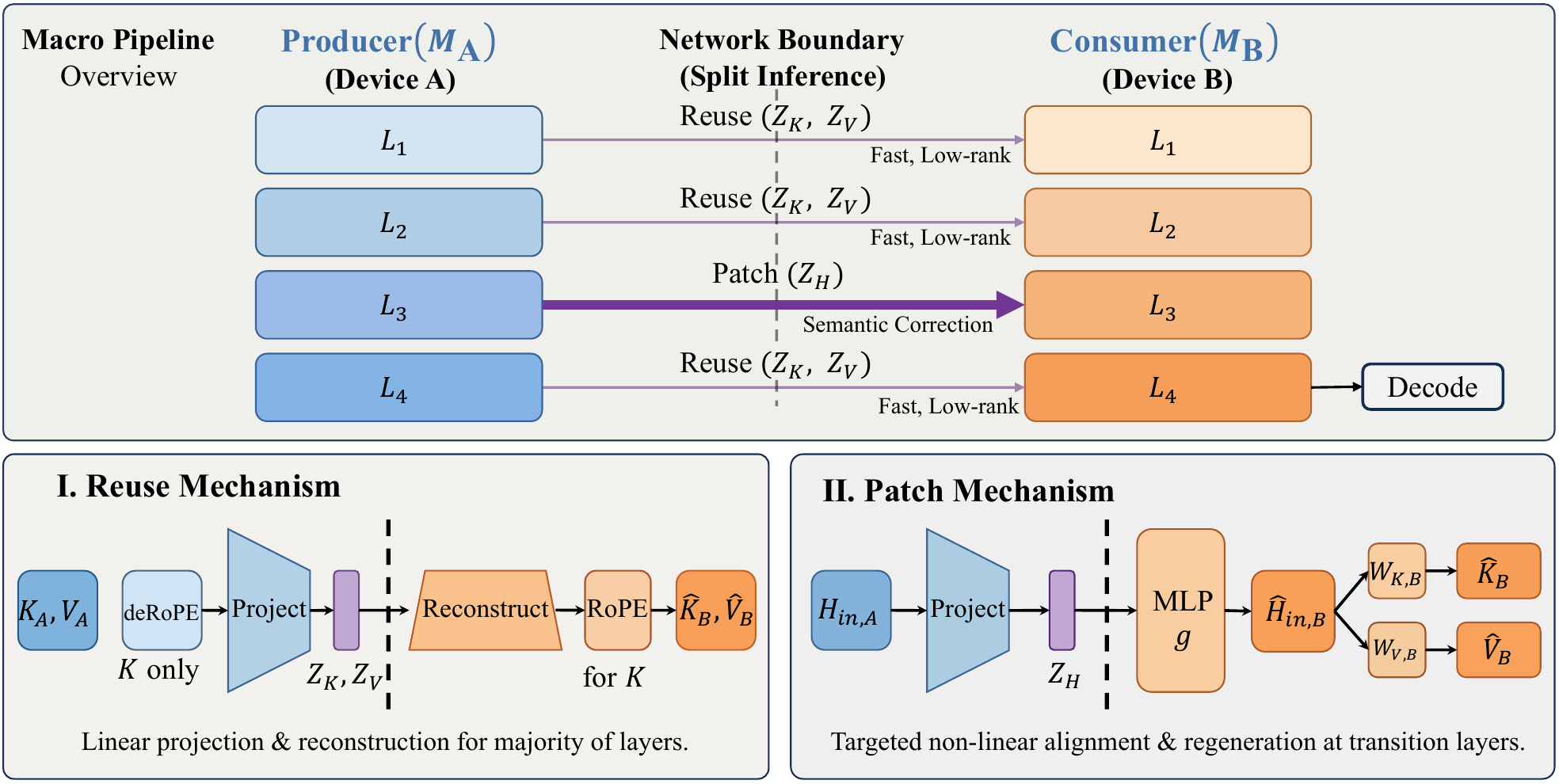}
  \caption{\textbf{SCD Overview.}
  \emph{Offline:} We collect paired traces on identical prefixes to learn per-layer low-rank translators for KV pairs (Reuse) and aligners for transition semantic states (Patch).
  \emph{Online:} The producer executes prefill, encodes, and transmits low-dimensional codes $(\mathbf{Z}_K^\ell, \mathbf{Z}_V^\ell)$ for $\ell \in \mathcal{L}_{\mathrm{reuse}}$ and $\mathbf{Z}_H^\ell$ for $\ell \in \mathcal{L}_{\mathrm{patch}}$. The consumer decodes these into its native-space states, bypassing local prefill.}
  \label{fig:scd_overview}
\end{figure*}

\subsection{Overview: Semantic Code Transport}
\label{subsec:overview}

As illustrated in Fig.~\ref{fig:scd_overview}, SCD replaces raw state transmission with semantic code transport.
During producer prefill, we transmit codes $\{\mathbf{Z}_K^\ell, \mathbf{Z}_V^\ell\}_{\ell \in \mathcal{L}_{\mathrm{reuse}}}$ for \textsc{Reuse} layers and $\{\mathbf{Z}_H^\ell\}_{\ell \in \mathcal{L}_{\mathrm{patch}}}$ for \textsc{Patch} layers.
The consumer decodes $\mathbf{Z}_K / \mathbf{Z}_V$ into $\mathcal{M}_B$-native KV caches and maps $\mathbf{Z}_H$ into $\mathcal{M}_B$'s pre-attention normalized space to regenerate consistent KV pairs.
Concretely, $\phi$ and $\psi$ in Eqs.~(1)--(2) are instantiated as the layer-wise linear maps $\mathbf{W}_{\mathrm{enc}}, \mathbf{W}_{\mathrm{dec}}$ (Reuse) plus aligners $g_\theta^\ell$ (Patch).

\subsection{Reuse: Cross-Model Low-Rank KV Reconstruction}
\label{subsec:reuse}

Reuse leverages the empirical low-rank structure of KV tensors and the high subspace overlap between corresponding layers of $\mathcal{M}_A$ and $\mathcal{M}_B$.
We learn a shared latent space and model-specific decoders per layer.

\subsubsection{Offline: Joint Subspace Learning}
\label{subsec:reuse_offline}

\paragraph{Paired Collection and deRoPE.}
We collect per-layer tensors from paired full prefills of $\mathcal{M}_A$ and $\mathcal{M}_B$ on the same calibration prefixes $\mathcal{D}_{\mathrm{cal}}$.
The calibration stage is offline and produces frozen artifacts for a fixed model pair.
For keys, we remove RoPE to operate in the content space:
\begin{align}
\tilde{\mathbf{K}}_A^\ell = \operatorname{deRoPE}(\mathbf{K}_A^\ell), \qquad
\tilde{\mathbf{K}}_B^\ell = \operatorname{deRoPE}(\mathbf{K}_B^\ell).
\end{align}
Values $\mathbf{V}$ are processed directly.

\paragraph{Joint Low-Rank Factorization.}
Flattening samples into matrices $\mathbf{A}_K^\ell, \mathbf{B}_K^\ell \in \mathbb{R}^{N \times d_h}$, we construct the joint matrix $\mathbf{X}_K^\ell = [\mathbf{A}_K^\ell \; \mathbf{B}_K^\ell] \in \mathbb{R}^{N \times 2d_h}$.
A rank-$r_K$ approximation yields shared latents $\mathbf{Z}_K^\ell \in \mathbb{R}^{N \times r_K}$ and decoders $\mathbf{W}_{\mathrm{dec}, A}^\ell, \mathbf{W}_{\mathrm{dec}, B}^\ell \in \mathbb{R}^{r_K \times d_h}$ such that:
\begin{align}
\mathbf{A}_K^\ell \approx \mathbf{Z}_K^\ell \mathbf{W}_{\mathrm{dec}, A}^\ell, \qquad
\mathbf{B}_K^\ell \approx \mathbf{Z}_K^\ell \mathbf{W}_{\mathrm{dec}, B}^\ell.
\label{eq:joint_decoder_K}
\end{align}
The same procedure applies independently to values to obtain $\mathbf{Z}_V^\ell$.

\paragraph{Producer Encoders.}
To map new producer observations to the shared latent space, we learn a projection $\mathbf{W}_{\mathrm{enc}}^\ell \in \mathbb{R}^{d_h \times r_K}$ via ridge regression:
\begin{align}
\mathbf{W}_{\mathrm{enc}}^\ell
= \mathop{\arg\min}_{\mathbf{W}} \; \left\| \mathbf{A}_K^\ell \mathbf{W} - \mathbf{Z}_K^\ell \right\|_F^2 + \lambda \|\mathbf{W}\|_F^2.
\label{eq:ridge_encoder}
\end{align}

\subsubsection{Online: Encode and Decode}
\label{subsec:reuse_online}

For each $\ell \in \mathcal{L}_{\mathrm{reuse}}$, the producer computes:
\begin{align}
\mathbf{Z}_K^\ell = \tilde{\mathbf{K}}_A^\ell \mathbf{W}_{\mathrm{enc}, K}^\ell,
\qquad
\mathbf{Z}_V^\ell = \mathbf{V}_A^\ell \mathbf{W}_{\mathrm{enc}, V}^\ell.
\label{eq:encode_z_kv}
\end{align}
The consumer reconstructs native-space caches via:
\begin{align}
\tilde{\mathbf{K}}_{B}^\ell &= \mathbf{Z}_K^\ell \mathbf{W}_{\mathrm{dec}, B}^\ell,
&
\mathbf{V}_{B}^\ell &= \mathbf{Z}_V^\ell \mathbf{W}_{\mathrm{dec}, B}^\ell,
\label{eq:decode_kv}
\end{align}
followed by re-applying RoPE: $\mathbf{K}_{B}^\ell = \operatorname{RoPE}(\tilde{\mathbf{K}}_{B}^\ell)$.

\subsection{Patch: Semantic Correction at Transition Layers}
\label{subsec:patch}

Low-rank reconstruction is insufficient for critical layers where feature mismatches amplify.
Patch intercepts the consumer's \emph{transition semantic state} to enforce alignment.

\subsubsection{Patch Signal: Pre-Attention Normalized Input}
For $\ell \in \mathcal{L}_{\mathrm{patch}}$, we define the transition state as the post-norm input to the attention block:
\begin{align}
\mathbf{H}_{\mathrm{in}, A}^\ell \triangleq \mathbf{x}^{\ell}_{\mathrm{norm}, A},
\qquad
\mathbf{H}_{\mathrm{in}, B}^\ell \triangleq \mathbf{x}^{\ell}_{\mathrm{norm}, B},
\end{align}
where $\mathbf{H} \in \mathbb{R}^{T \times d_{\text{model}}}$.
The producer compresses $\mathbf{H}_{\mathrm{in}, A}^\ell$ into a code $\mathbf{Z}_H^{\ell} = \mathbf{H}_{\mathrm{in}, A}^\ell \, \mathbf{W}_{\mathrm{enc}, H}^{\ell}$, where $\mathbf{W}_{\mathrm{enc}, H}^{\ell} \in \mathbb{R}^{d_{\text{model}} \times r_H}$ is learned similarly to Eq.~\eqref{eq:ridge_encoder}.

\subsubsection{Consumer Aligner and Regeneration}
The consumer applies a non-linear aligner $g_{\theta}^{\ell}$ (an MLP) to map the code to its native space:
\begin{align}
\hat{\mathbf{H}}_{\mathrm{in}, B}^\ell = g_{\theta}^{\ell}(\mathbf{Z}_H^{\ell}), \qquad \ell \in \mathcal{L}_{\mathrm{patch}}.
\label{eq:aligner}
\end{align}
KV pairs are then regenerated using the consumer's native weights:
\begin{subequations}
\label{eq:kv_from_patch}
\begin{align}
\hat{\mathbf{K}}_B^{\ell} &= \operatorname{RoPE}\Bigl(\hat{\mathbf{H}}_{\mathrm{in}, B}^\ell \mathbf{W}_{k, B}^{\ell}\Bigr), \\
\hat{\mathbf{V}}_B^{\ell} &= \hat{\mathbf{H}}_{\mathrm{in}, B}^\ell \mathbf{W}_{v, B}^{\ell}.
\end{align}
\end{subequations}
We train $g_{\theta}^{\ell}$ to minimize the L2 reconstruction error $\| g_{\theta}^{\ell}(\mathbf{Z}_H^{\ell}) - \mathbf{H}_{\mathrm{in}, B}^\ell \|_2^2$.

\subsection{Online Procedure}
\label{subsec:online_proc}

Algorithm~\ref{alg:scd_inference} details the inference pipeline.
The transmission complexity is $O(Tr)$, significantly lower than the raw transfer cost $O(Td)$ since $r \ll d$.
$\mathcal{L}_{\mathrm{patch}}$ and $\mathcal{L}_{\mathrm{reuse}}$ are fixed inputs from offline calibration; branch selection reduces to constant-time membership checks over layer indices, so latency variation comes from prefix length and model execution, not from an online policy search.

\begin{algorithm}[t]
\caption{Split Inference with Reuse and Patch}
\label{alg:scd_inference}
\begin{algorithmic}
  \STATE \textbf{Input:} Prefix $x_{1:T}$, Producer $\mathcal{M}_A$, Consumer $\mathcal{M}_B$, fixed sets $\mathcal{L}_{\mathrm{patch}}, \mathcal{L}_{\mathrm{reuse}}$
  \STATE \textbf{Output:} Consumer cache $\mathcal{C}_B$ ready for decoding
  \STATE \textbf{Producer (Device A):}
  \STATE Compute full prefill on $\mathcal{M}_A$
  \FOR{$\ell \in \mathcal{L}_{\mathrm{reuse}}$}
    \STATE $\mathbf{Z}_K^\ell \leftarrow \operatorname{deRoPE}(\mathbf{K}_A^\ell) \mathbf{W}_{\mathrm{enc}, K}^\ell$
    \STATE $\mathbf{Z}_V^\ell \leftarrow \mathbf{V}_A^\ell \mathbf{W}_{\mathrm{enc}, V}^\ell$
  \ENDFOR
  \FOR{$\ell \in \mathcal{L}_{\mathrm{patch}}$}
    \STATE $\mathbf{Z}_H^{\ell} \leftarrow \mathbf{x}^{\ell}_{\mathrm{norm}, A} \mathbf{W}_{\mathrm{enc}, H}^{\ell}$
  \ENDFOR
  \STATE Transmit $\{\mathbf{Z}_K^\ell, \mathbf{Z}_V^\ell\}_{\text{reuse}} \cup \{\mathbf{Z}_H^{\ell}\}_{\text{patch}}$ to Device B
  \STATE \hrulefill
  \STATE \textbf{Consumer (Device B):}
  \FOR{$\ell \in \mathcal{L}_{\mathrm{reuse}}$}
    \STATE $\tilde{\mathbf{K}}_{B}^\ell \leftarrow \mathbf{Z}_K^\ell \mathbf{W}_{\mathrm{dec}, B}^\ell$
    \STATE $\mathbf{K}_{B}^\ell \leftarrow \operatorname{RoPE}(\tilde{\mathbf{K}}_{B}^\ell)$
    \STATE $\mathbf{V}_{B}^\ell \leftarrow \mathbf{Z}_V^\ell \mathbf{W}_{\mathrm{dec}, B}^\ell$
  \ENDFOR
  \FOR{$\ell \in \mathcal{L}_{\mathrm{patch}}$}
    \STATE $\hat{\mathbf{H}}_{\mathrm{in}, B}^\ell \leftarrow g_{\theta}^{\ell}(\mathbf{Z}_H^{\ell})$
    \STATE Regenerate $(\mathbf{K}_B^\ell, \mathbf{V}_B^\ell)$ via Eq.~\eqref{eq:kv_from_patch}
  \ENDFOR
  \STATE Initiate decoding with assembled cache $\mathcal{C}_B$
\end{algorithmic}
\end{algorithm}

\section{Theoretical Analysis}
\label{sec:theory}

We analyze why \textsc{Reuse} alone leads to multiplicative error amplification across layers and how \textsc{Patch} acts as a spectral truncation operator.
Our analysis connects layer-wise reconstruction errors to the divergence in the consumer's output distribution.
Proofs are detailed in Appendix~\ref{sec:theory_proofs}.

\paragraph{Setup.}
Consider a decoding step $t$ given prefix $x_{1:T}$.
Let $\mathbf{h}_B^\ell \in \mathbb{R}^d$ denote the oracle hidden state on $\mathcal{M}_B$ at layer $\ell$ for the final prefix position (the state from which the first decode step reads), and $\hat{\mathbf{h}}_B^\ell$ denote the SCD approximation of the same state under the reconstructed cache.
Let $\mathbf{o}_B, \hat{\mathbf{o}}_B \in \mathbb{R}^{V}$ be the corresponding output logits.
We quantify the \textbf{layer-local injection errors} as:
\begin{align}
\varepsilon_{\mathrm{reuse}}^\ell &\triangleq \lVert (\tilde{\mathbf{K}}_B^\ell, \mathbf{V}_B^\ell) - (\tilde{\mathbf{K}}_{B,\mathrm{reuse}}^\ell, \mathbf{V}_{B,\mathrm{reuse}}^\ell) \rVert, \\
\varepsilon_{\mathrm{patch}}^\ell &\triangleq \lVert \mathbf{h}_B^\ell - \hat{\mathbf{h}}_B^\ell \rVert, \quad \ell \in \mathcal{L}_{\mathrm{patch}},
\end{align}
where $\lVert \cdot \rVert$ denotes the spectral norm.

\paragraph{Assumption 1: Layer-wise Stability.}
We assume the Transformer layer map $\mathcal{F}_\ell$ is Lipschitz continuous with respect to both input states and cached KV pairs.
For any reuse layer $\ell \in \mathcal{L}_{\mathrm{reuse}}$, there exist stability constants $\alpha_\ell, \beta_\ell \ge 0$ such that:
\begin{equation}
\label{eq:layer_stability}
\lVert \hat{\mathbf{h}}_B^{\ell+1} - \mathbf{h}_B^{\ell+1} \rVert
\le \alpha_\ell \lVert \hat{\mathbf{h}}_B^\ell - \mathbf{h}_B^\ell \rVert + \beta_\ell \, \varepsilon_{\mathrm{reuse}}^\ell.
\end{equation}
For patch layers $\ell \in \mathcal{L}_{\mathrm{patch}}$, the aligner explicitly resets the state, bounding the error solely by the patch reconstruction quality:
\begin{equation}
\label{eq:patch_reset}
\lVert \hat{\mathbf{h}}_B^{\ell} - \mathbf{h}_B^{\ell} \rVert \le \varepsilon_{\mathrm{patch}}^\ell.
\end{equation}

\paragraph{Error Propagation and Truncation.}
Unrolling Eq.~\eqref{eq:layer_stability} reveals that errors accumulate multiplicatively.
However, Eq.~\eqref{eq:patch_reset} introduces a reset mechanism: a Patch layer effectively truncates the history, preventing errors from earlier layers from propagating further.

\begin{lemma}[Segmented Error Bound]
\label{lem:segmented_error}
Let patch indices be $s_1 < s_2 < \dots < s_m$. Define segments using $s_0 \triangleq 0$ and $s_{m+1} \triangleq L$.
For any target layer $\ell$ located in segment $(s_j, s_{j+1}]$, the accumulated representation error is bounded by:
\begin{equation}
\label{eq:segmented_bound}
\begin{aligned}
\lVert \hat{\mathbf{h}}_B^\ell - \mathbf{h}_B^\ell \rVert
\;\le\;&
\left(\prod_{i=s_j}^{\ell-1} \alpha_i\right) \varepsilon_{\mathrm{patch}}^{s_j} \\
&+\; \sum_{k=s_j}^{\ell-1} \left(\prod_{m=k+1}^{\ell-1} \alpha_m\right) \beta_k \, \varepsilon_{\mathrm{reuse}}^k,
\end{aligned}
\end{equation}
where $\varepsilon_{\mathrm{patch}}^{s_0} \triangleq 0$.
\end{lemma}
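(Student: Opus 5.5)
The plan is to unroll the one-step recursion of Assumption~1, Eq.~\eqref{eq:layer_stability}, by induction on $\ell$ within a single segment, using the patch reset Eq.~\eqref{eq:patch_reset} as the base case. Write $e^\ell \triangleq \lVert \hat{\mathbf{h}}_B^\ell - \mathbf{h}_B^\ell \rVert$ and fix a segment $(s_j, s_{j+1}]$. Let $R(\ell)$ denote the right-hand side of Eq.~\eqref{eq:segmented_bound}. The claim is $e^\ell \le R(\ell)$ for every $\ell$ in the segment. I will actually prove it for all $\ell \in [s_j, s_{j+1}]$, since starting at $\ell = s_j$ costs nothing.

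\emph{Base case $\ell = s_j$.} The product $\prod_{i=s_j}^{s_j-1}\alpha_i$ is empty, so it equals $1$, and the sum over $k$ is empty. Hence $R(s_j) = \varepsilon_{\mathrm{patch}}^{s_j}$. For $j \ge 1$, Eq.~\eqref{eq:patch_reset} gives exactly $e^{s_j} \le \varepsilon_{\mathrm{patch}}^{s_j}$. For $j = 0$, I read $\hat{\mathbf{h}}_B^0 = \mathbf{h}_B^0$: both models embed the same tokens in the consumer's native embedding, so the input error vanishes. This matches the convention $\varepsilon_{\mathrm{patch}}^{s_0} \triangleq 0$.

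\emph{Inductive step.} Suppose $s_j \le \ell-1$ and $\ell \le s_{j+1}$. Assume the bound holds at $\ell-1$. The layers $\ell-1 \in [s_j, s_{j+1})$ are advanced by the Reuse recursion, so Eq.~\eqref{eq:layer_stability} gives $e^\ell \le \alpha_{\ell-1} e^{\ell-1} + \beta_{\ell-1}\varepsilon_{\mathrm{reuse}}^{\ell-1}$. Substitute $R(\ell-1)$ for $e^{\ell-1}$; this is legitimate because $\alpha_{\ell-1} \ge 0$ preserves the inequality. Multiplying the patch term by $\alpha_{\ell-1}$ extends its product to $\prod_{i=s_j}^{\ell-1}\alpha_i$. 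Each summand $k \le \ell-2$ picks up the factor $\alpha_{\ell-1}$, giving $\prod_{m=k+1}^{\ell-1}\alpha_m$. The new term $\beta_{\ell-1}\varepsilon_{\mathrm{reuse}}^{\ell-1}$ is exactly the $k=\ell-1$ summand, whose product is empty. This reproduces $R(\ell)$ and closes the induction; the computation is a routine discrete Gr\"onwall/variation-of-constants unrolling.

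\emph{Main obstacle.} The delicate point is the bookkeeping at the patch layer $s_j$ itself. In the recursion from $s_j$ to $s_j+1$, I must apply Eq.~\eqref{eq:layer_stability} with index $s_j$, yet $s_j \in \mathcal{L}_{\mathrm{patch}}$, while Assumption~1 states that recursion only for reuse layers. The sum in Eq.~\eqref{eq:segmented_bound} does include $k = s_j$, so the statement implicitly interprets the transition $s_j \to s_j+1$ as governed by the same Lipschitz recursion.

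I would handle this in one of two ways. The first option is to state explicitly that Eq.~\eqref{eq:layer_stability} is assumed for the layer map at every index, with the patch reset applied to the state at $s_j$ before propagation. The second option, if one prefers that patch layers inject no KV error, is to set $\varepsilon_{\mathrm{reuse}}^{s_j} \triangleq 0$ there, so the $k=s_j$ term vanishes and the bound only becomes looser-than-needed. Either convention makes the induction go through verbatim.

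Finally, errors from before $s_j$ never appear in the bound. The reason is that the base case uses only Eq.~\eqref{eq:patch_reset}, which is independent of $e^{s_j-1}$. This independence is precisely the truncation property the lemma is meant to formalize.
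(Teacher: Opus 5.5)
Your proof is correct and follows the same route as the paper: initialize at $s_j$ with the patch reset (or with zero error when $j=0$), then unroll the stability recursion for $u=s_j,\dots,\ell-1$, which is exactly your induction. Your flag about the $k=s_j$ step is apt, since the paper's proof also applies Eq.~\eqref{eq:layer_stability} at $u=s_j$ while calling it a Reuse layer, silently adopting your first convention; note that your second option still needs that extension to obtain the $\alpha_{s_j}$ factor and only zeroes the injection term.
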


\begin{proof}[Proof Sketch]
\renewcommand{\qedsymbol}{}
The proof follows by induction. The base case at $\ell=s_j$ is satisfied by Eq.~\eqref{eq:patch_reset}. For subsequent layers, we apply the recurrence in Eq.~\eqref{eq:layer_stability} starting from the reset point $s_j$.
\end{proof}

\paragraph{Theorem 1 (NLL Gap).}
Assuming the output head is $L_{\mathrm{out}}$-Lipschitz and the log-softmax function is locally $C_{\mathrm{sm}}$-Lipschitz, the divergence in NLL for the target token $y_t$ satisfies:
\begin{equation}
\label{eq:nll_gap}
\left| \log p(y_t) - \log \hat{p}(y_t) \right|
\;\le\;
C_{\mathrm{sm}} L_{\mathrm{out}} \lVert \hat{\mathbf{h}}_B^L - \mathbf{h}_B^L \rVert,
\end{equation}
where the RHS is bounded by Lemma~\ref{lem:segmented_error}.

\paragraph{Theoretical Implications.}
This formalizes our \textit{Reuse-heavy, Patch-sparse} strategy.
Without Patch, the error term is dominated by $\prod_{i=1}^L \alpha_i$, which compounds multiplicatively with depth $L$; in practice $\alpha_i \gtrsim 1$ due to residual connections, so even small per-layer amplification accumulates into a large drift over deep stacks.
By inserting a Patch at $s_j$, we replace the accumulated error term with a small local term $\varepsilon_{\mathrm{patch}}^{s_j}$, effectively truncating the drift.

\section{Experiments}
\label{sec:experiments}

We evaluate SCD in a split inference setting where a producer runs prefill and a consumer starts decoding without a full consumer-side prefill.
This setting is motivated by disaggregated serving, where KV transfer can dominate latency and requires high bandwidth at scale~\citep{zhong2024distserve}.
We focus on the practical scenario of a base model and its fine-tuned variant sharing the same architecture but different weights, for which cross-model cache reuse is known to be non-trivial~\citep{liu2024droidspeak}.

\subsection{Experimental Setup}
\label{subsec:exp_setup}

\paragraph{Tasks and Datasets.}
We evaluate generation quality and distributional fidelity across diverse benchmarks:
(i) \textbf{Question Answering (QA):} We report F1 scores on CMRC2018 (Chinese extractive QA) and HotpotQA (English multi-hop reasoning), covering both span-extraction and complex reasoning tasks;
(ii) \textbf{Language Modeling:} We measure Perplexity (PPL) on WikiText-2 to quantify general representational consistency; and
(iii) \textbf{Distributional Fidelity:} We compute token-level KL divergence and Total Variation (TV) distance between the Oracle consumer distribution and the approximated distribution.

\paragraph{Split-Inference Protocol.}
Given a prefix $x_{1:T}$, the producer runs a normal prefill and transmits either raw KV (baselines) or semantic codes (SCD).
The consumer reconstructs caches (and optionally applies Patch) before initiating greedy decoding.

\paragraph{Calibration Protocol.}
SCD uses an offline calibration set $\mathcal{D}_{\mathrm{cal}}$ only to learn auxiliary translators for a fixed model pair.
For the canonical MistralLite$\to$Mistral-7B run, we use 200 CMRC2018 prefixes (mean prefix length 609.0 tokens; 121,805 total tokens).
For each prefix, we run full prefills on both producer and consumer with identical tokenization and attention masks, then save paired per-layer KV tensors and normalized pre-attention hidden states.
The \textsc{Reuse} module is fitted by low-rank subspace identification followed by ridge-regression encoders/decoders; the \textsc{Patch} module trains sparse MLP aligners on the selected transition layers.
The learned artifacts are frozen and reused across online requests.

\paragraph{Bandwidth Measurement.}
We measure TTFT and end-to-end latency under controlled effective bandwidth $B_{\mathrm{net}}$ (sweeping from 100 Gbps to 1 Tbps).
We report the total transmitted payload size for each method.

\paragraph{Method Compared.}
We compare the following methods:
\begin{itemize}
  \item \textbf{Oracle}: Full prefill on $\mathcal{M}_B$ (Upper bound on quality; Lower bound on latency).
  \item \textbf{Raw KV}: Transmits full BF16 KV caches from producer to consumer.
  \item \textbf{Quantized KV}: Transmits \textbf{4-bit quantized} KV caches (group-wise INT4) and dequantizes on the consumer.
  \item \textbf{DroidSpeak-style Selective Recompute}~\citep{liu2024droidspeak}: Reuses raw KV for most layers but fully recomputes a selected subset of critical layers on $\mathcal{M}_B$, following the selection policy proposed in DroidSpeak.
  \item \textbf{SCD (Reuse-only)}: Applies low-rank reuse to all layers (i.e., $\mathcal{L}_{\mathrm{patch}}=\emptyset$); no recomputation is performed.
  \item \textbf{SCD}: The proposed full method, applying Patch at selected transition layers and Reuse elsewhere.
\end{itemize}

\paragraph{Implementation Details.}
We use the same tokenization and exact attention masking on both sides.
Keys are deRoPE'd before encoding and reRoPE'd after decoding.
We keep $K/V/H$ separate with independent ranks as described in Section~\ref{sec:method} and Appendix~\ref{sec:details}.
Offline calibration cost and deployment footprint are reported in Section~\ref{subsec:offline_cost_main}.

\begin{table*}[t]
  \centering
  \footnotesize
  \caption{\textbf{Main Results across Model Scales.}
  Mistral (7B) and Qwen (32B) pairs on the QA benchmarks of Section~\ref{subsec:exp_setup}. \textbf{SCD} consistently recovers Oracle-level quality across scales while achieving $\sim$2.0$\times$ speedup over full prefill. Oracle TTFT differs across tables due to per-dataset prefix length distributions; underlying model pairs are unchanged.}
  \label{tab:main_results_both}
  \setlength{\tabcolsep}{4pt}
  \begin{tabular}{l cccc cccc}
    \toprule
    & \multicolumn{4}{c}{\textbf{Mistral Pair (7B)}} & \multicolumn{4}{c}{\textbf{Qwen Pair (32B)}} \\
    \cmidrule(lr){2-5} \cmidrule(lr){6-9}
    \textbf{Method} & \textbf{TTFT} & \textbf{Speedup} & \textbf{F1} & \textbf{KL} & \textbf{TTFT} & \textbf{Speedup} & \textbf{F1} & \textbf{KL} \\
    & (ms) & (vs. Oracle) & (QA) & & (ms) & (vs. Oracle) & (QA) & \\
    \midrule
    Oracle & 237.1 & 1.00$\times$ & 0.807 & 0.00 & 503.6 & 1.00$\times$ & 0.876 & 0.00 \\
    Raw KV (BF16) & 61.3 & 3.87$\times$ & 0.192 & 4.68 & 128.7 & 3.91$\times$ & 0.283 & 8.16 \\
    Quantized (4-bit) & 54.6 & 4.34$\times$ & 0.151 & 7.16 & 115.3 & 4.37$\times$ & 0.212 & 10.27 \\
    DroidSpeak-style & 139.4 & 1.70$\times$ & 0.754 & 0.31 & 278.4 & 1.81$\times$ & 0.813 & 0.49 \\
    \midrule
    \textbf{SCD} & 119.7 & \textbf{1.98$\times$} & \textbf{0.769} & \textbf{0.26} & 243.7 & \textbf{2.07$\times$} & \textbf{0.847} & \textbf{0.31} \\
    \bottomrule
  \end{tabular}
\end{table*}

\subsection{Main Results}
\label{subsec:main_results}

\subsubsection{Bandwidth--Latency Trade-off}
\label{subsec:bw_latency}

\begin{figure}[t]
  \centering
  \includegraphics[width=0.97\linewidth]{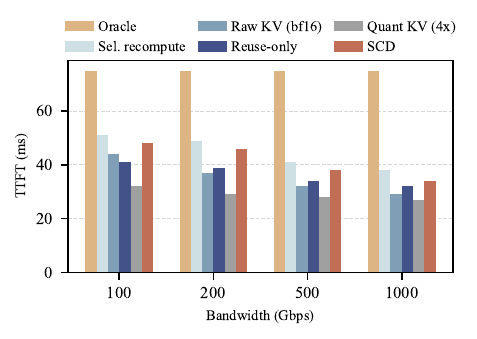}
  \caption{\textbf{Bandwidth--Latency Trade-off.}
  Time-to-first-token (TTFT) versus effective bandwidth (log-scale).
  While \textbf{Quantized KV (4-bit)} is the fastest due to minimal payload, it suffers from catastrophic quality collapse (see Table~\ref{tab:quality_main}).
  \textbf{SCD} achieves a favorable sweet spot: it is significantly faster than Raw KV and DroidSpeak-style Selective Recompute while maintaining Oracle-level quality.}
  \label{fig:bw_latency_tradeoff}
\end{figure}

\paragraph{Bandwidth--Latency Analysis.}
Figure~\ref{fig:bw_latency_tradeoff} sweeps the link bandwidth from 100~Gbps to 1~Tbps.
As bandwidth increases, TTFT decreases and asymptotically approaches the compute/synchronization floor.
\textbf{Quantized KV (4-bit)} achieves the lowest TTFT, consistent with its minimal payload size.
However, as shown in Table~\ref{tab:quality_main}, this comes at the cost of unusable generation quality.
\textbf{SCD (Reuse+Patch)} improves significantly over \textbf{DroidSpeak-style Selective Recompute}, with its overhead dominated by local reconstruction (GEMMs) rather than data transfer.
This confirms SCD's efficiency in bandwidth-constrained regimes typical of disaggregated serving.

\subsubsection{Quality and Fidelity}
\label{subsec:quality_table}

\begin{table}[t]
  \centering
  \small
  \caption{\textbf{Main Results: Quality vs. Efficiency.}
  Evaluated on MistralLite$\to$Mistral-7B at $B_{\mathrm{net}}{=}200$ Gbps.
  \textbf{SCD} matches Oracle quality (F1 0.78 vs 0.81) while achieving \textbf{2.65$\times$} speedup.
  Note that \textbf{4-bit Quantization} fails to bridge the cross-model semantic gap (F1 0.13).}
  \label{tab:quality_main}
  \setlength{\tabcolsep}{3pt}
  \resizebox{\linewidth}{!}{
  \begin{tabular}{lrrrrrr}
    \toprule
    \textbf{Method} & \textbf{Payload} & \textbf{TTFT} & \textbf{Speedup} & \textbf{F1 Score} & \textbf{KL Div.} & \textbf{PPL} \\
    & (MB/req) & (ms) & (vs. Oracle) & (CMRC) & (WikiText) & (WikiText) \\
    \midrule
    Oracle         & 0.00   & 331.9    & 1.00$\times$ & 0.8105 & 0.0000 & 4.15 \\
    \midrule
    Raw KV (BF16)                & 115.02 & 145.5  & 2.28$\times$ & 0.2725 & 3.1209 & $>$100 \\
    Quantized (4-bit)          & \textbf{28.76}  & \textbf{59.2}    & \textbf{5.60$\times$} & 0.1289 & 6.1887 & $>$100 \\
    DroidSpeak-style          & 127.34 & 198.7  & 1.67$\times$ & 0.7052 & 0.1459 & 5.33 \\
    \midrule
    SCD (Reuse-only)             & 69.02  & 118.2    & 2.81$\times$ & 0.6421 & 0.1661 & 6.11 \\
    \textbf{SCD}   & 87.31  & 125.2    & \textbf{2.65$\times$} & \textbf{0.7850} & \textbf{0.1105} & \textbf{4.65} \\
    \bottomrule
  \end{tabular}}
\end{table}

Table~\ref{tab:main_results_both} and Table~\ref{tab:quality_main} highlight the necessity of semantic-aware transfer.
\textbf{Quantized KV (4-bit)} collapses quality (F1 0.1289), proving that standard compression cannot handle the weight mismatch between heterogeneous models.
\textbf{Reuse-only} (all-layer reuse) improves F1 to 0.6421 but still lags behind Oracle.
By adding \textbf{Patch}, SCD recovers nearly full Oracle performance (F1 0.7850) with only a modest latency increase (+7.0 ms), effectively bridging the semantic gap.
Crucially, Table~\ref{tab:main_results_both} demonstrates that this advantage holds for the larger \textbf{Qwen-32B} pair, validating the scalability of our approach.
Additional adapted compression baselines are reported in Appendix~\ref{sec:neighboring_baselines}.

\subsection{Ablations}
\label{subsec:ablations}

\subsubsection{Rank Sensitivity and Pareto Frontier}
\label{subsec:rank_sens}

\begin{figure}[t]
  \centering
  \includegraphics[width=0.9\columnwidth]{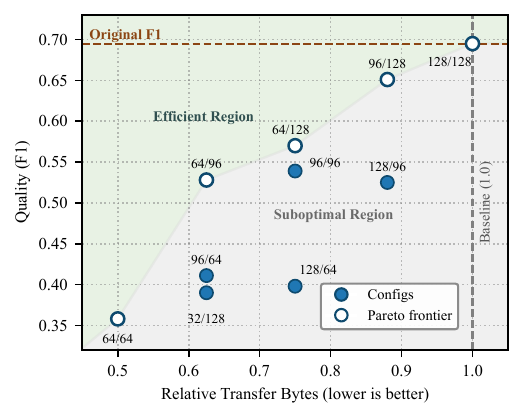}
  \caption{\textbf{Rank Sensitivity (Pareto Frontier).}
  Quality (F1) versus transferred bytes across $(r_K,r_V)$ configurations.
  Hollow markers denote non-dominated operating points. Reducing $r_K$ yields substantial compression with minimal quality loss, highlighting the asymmetric information density in K vs. V.}
  \label{fig:pareto_rank}
\end{figure}

Figure~\ref{fig:pareto_rank} plots the quality--cost trade-off.
The Pareto frontier demonstrates that SCD offers tunable compression.
We observe that $r_K$ can be reduced more aggressively than $r_V$ without hurting quality, suggesting that Value states carry more fine-grained semantic information required for generation.

\subsubsection{Patch Necessity and Budget}

\begin{table}[t]
  \centering
  \small
  \caption{\textbf{Effect of Patch Budget ($k$) on WikiText-2.}
  Increasing the number of patched layers ($k$) monotonically improves perplexity (PPL) at the cost of modest TTFT overhead. The default $k{=}6$ used in the main experiments is chosen from the F1-based efficiency curve in Figure~\ref{fig:patch_budget} rather than this PPL profile, which is why $k{=}6$ is not listed explicitly in this table.}
  \label{tab:patch_ablation}
  \resizebox{\linewidth}{!}{
  \begin{tabular}{l c c c c}
    \toprule
    \textbf{Configuration} & \textbf{Patched Layers} & \textbf{TTFT (ms)} & \textbf{PPL} & \textbf{$\Delta$ PPL} \\
    \midrule
    Reuse-only       & 0 & 112.3 & 6.788 & 0.00 \\
    + Patch $\ell_0$ & 1 & 115.9 & 5.971 & -0.82 \\
    + Patch Top-3    & 3 & 125.7 & 5.326 & -1.46 \\
    + Patch Top-5    & 5 & 136.4 & 5.017 & -1.77 \\
    + Patch Top-8    & 8 & 145.9 & 4.816 & -1.97 \\
    \bottomrule
  \end{tabular}}
\end{table}

Table~\ref{tab:patch_ablation} confirms that \textsc{Patch} is not cosmetic.
Starting from \textsc{Reuse-only} (PPL 6.788), adding just 5 patched layers reduces PPL to 5.017, close to the Oracle baseline.
This supports our hypothesis that error propagation is concentrated in a few critical layers; correcting these via Patch stabilizes the entire generation process.

\begin{figure}[t]
  \centering
  \includegraphics[width=1\linewidth]{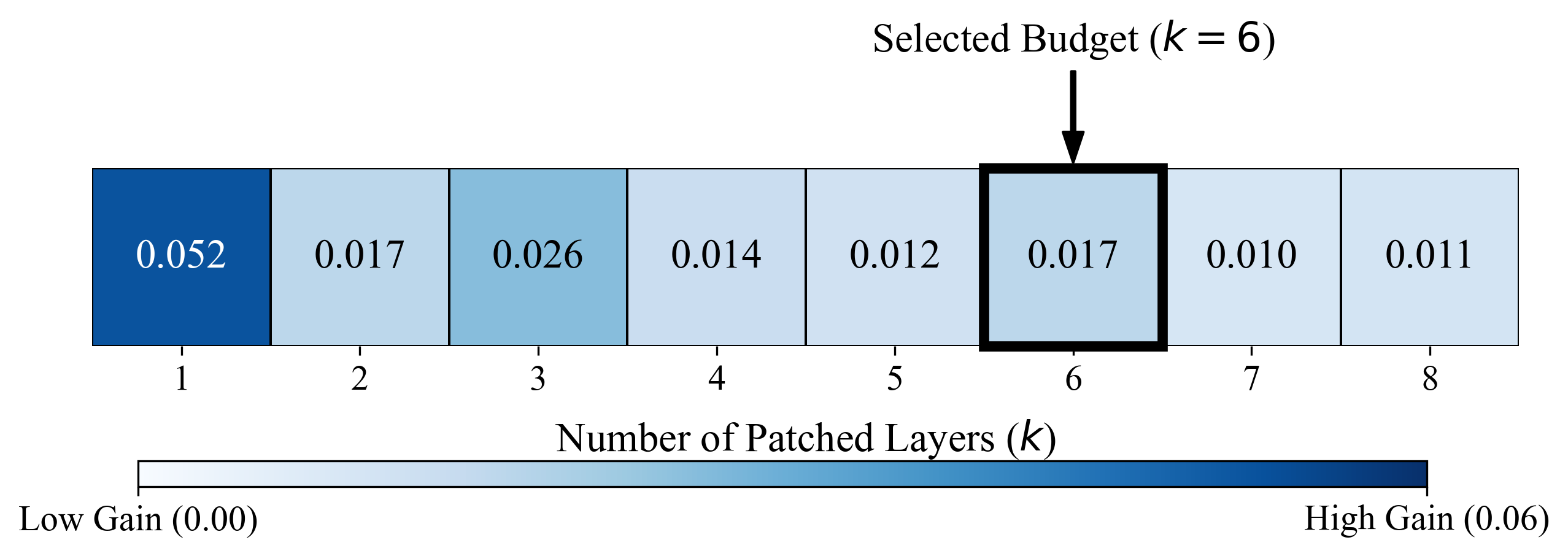}
  \caption{\textbf{Patch Budget Selection.}
  Cumulative efficiency $\Delta F(\mathcal{S}_k) / (c \cdot k)$ as a function of the patch budget $k$, where $\Delta F$ is the F1 gain over the all-Reuse baseline and $c$ is the per-layer patch cost (Appendix~\ref{sec:layer_select}). We select $k{=}6$ as the default budget at the argmax of this curve; smaller $k$ leaves quality on the table, while larger $k$ adds cost faster than it adds quality.}
  \label{fig:patch_budget}
\end{figure}

\subsection{Analysis: Error Propagation}
\label{subsec:analysis_theory}

\begin{figure}[t]
  \centering
  \includegraphics[width=0.9\columnwidth]{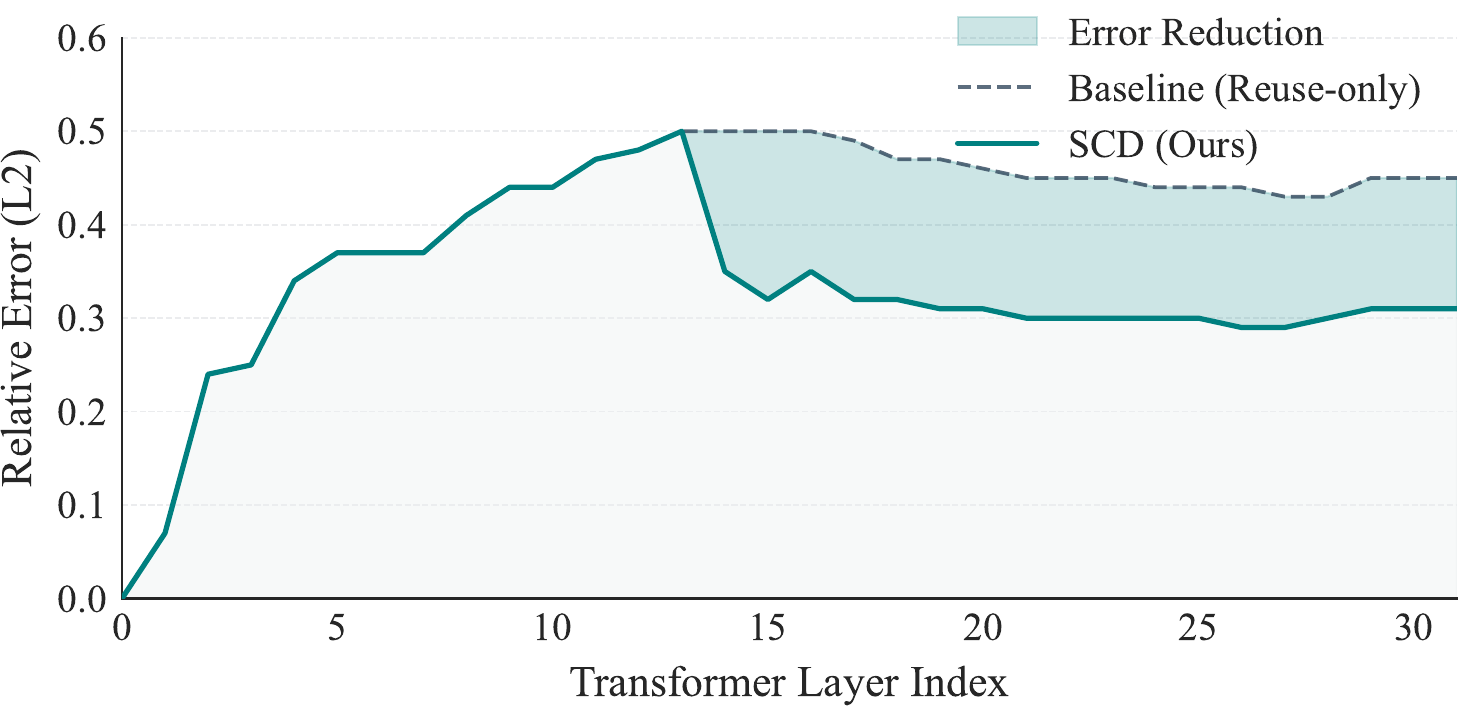}
  \caption{\textbf{Layer-wise Error Propagation.}
  Relative $\ell_2$ error of the pre-attention normalized input $x^{\ell}_{\mathrm{norm},B}$.
  \textsc{Reuse-only} exhibits compounding error accumulation.
  \textsc{SCD} (Patch) effectively truncates this error at transition layers (marked by dashed lines), preventing downstream drift.}
  \label{fig:layerwise_xnorm}
\end{figure}

Figure~\ref{fig:layerwise_xnorm} visualizes the truncation effect predicted by our theory.
With \textsc{Reuse-only}, feature mismatch compounds with depth.
In contrast, \textsc{SCD} resets the error at each patched layer, keeping the trajectory close to the Oracle manifold.

\subsection{End-to-End Latency Breakdown}
\label{subsec:breakdown}

\begin{table}[t]
  \centering
  \small
  \caption{\textbf{End-to-end TTFT Breakdown (WikiText-2).}
  \textbf{SCD} achieves a \textbf{2.19$\times$} speedup over Oracle (271.3 ms) with minimal reconstruction overhead. Column abbreviations: \textbf{Prod.} (Producer Prefill), \textbf{Enc.} (Encoding), \textbf{Trans.} (Network Transfer), \textbf{Reb.} (Rebuild/Decompression), \textbf{Rec.} (Recomputation/Patching), \textbf{Dec$_1$} (First Token Decode).}
  \label{tab:ttft_breakdown_wiki}
  \setlength{\tabcolsep}{3pt}
  \renewcommand{\arraystretch}{1.10}
  \resizebox{\linewidth}{!}{%
  \begin{tabular}{l r r r r r r r c}
    \toprule
    \textbf{Method} &
    \textbf{Prod.} &
    \textbf{Enc.} &
    \textbf{Trans.} &
    \textbf{Reb.} &
    \textbf{Rec.} &
    \textbf{Dec$_1$} &
    \textbf{Total} &
    \textbf{Speedup} \\
    & (ms) & (ms) & (ms) & (ms) & (ms) & (ms) & (ms) & (vs. Oracle) \\
    \midrule
    Oracle & -- & -- & -- & -- & -- & -- & 271.3 & 1.00$\times$ \\
    \midrule
    DroidSpeak-style & 78.1 & 0.0 & 10.7 & 1.4 & 23.0 & 20.7 & 133.9 & 2.03$\times$ \\
    Reuse-only & 78.4 & 2.6 & 7.9  & 2.7 & \textbf{0.0}  & 21.3 & \textbf{112.9} & \textbf{2.40$\times$} \\
    SCD & 77.6 & 3.1 & 8.5 & 2.1 & 11.7 & 21.1 & 124.1 & 2.19$\times$ \\
    \bottomrule
  \end{tabular}}
\end{table}

Table~\ref{tab:ttft_breakdown_wiki} dissects the latency.
\textbf{Reuse-only} eliminates the recomputation overhead (0.0 ms), achieving the fastest total time among reuse-based methods (112.9 ms), though at lower quality.
\textbf{SCD} introduces a modest 11.7 ms overhead in the ``Recompute'' phase, which corresponds to the execution of the Patch Aligner networks and the regeneration of native KV pairs at transition layers.
This small investment yields the quality jump observed in Table~\ref{tab:quality_main}.

\subsection{Offline Calibration and Deployment Cost}
\label{subsec:offline_cost_main}

SCD shifts cross-model adaptation into a bounded offline phase.
For each producer--consumer pair, we collect paired traces, fit the Reuse translators, profile layer sensitivity, and train Patch aligners for the selected transition layers.
For the canonical MistralLite$\to$Mistral-7B run, the single-GPU pipeline takes 9.74 hours total, dominated by Patch aligner training; Reuse fitting is lightweight and neither stage retrains either backbone (per-stage breakdown in Appendix~\ref{sec:details}, Table~\ref{tab:offline_cost}).
The learned artifacts are reusable across requests for the same model pair: 8.4 MB for Reuse, 1.58 GB for Patch, totaling 397.7M auxiliary parameters (5.49\% of the 7.24B consumer).
Thus calibration is a one-time per-pair expense, and serving loads these frozen artifacts and follows the offline-selected route.

\section{Conclusion}
\label{sec:conclusion}

We proposed Semantic Cache Distillation (SCD), a state transfer framework for disaggregated LLM serving that replaces opaque KV transfer with compact semantic codes.
By combining \textsc{Reuse} (low-rank reconstruction) with \textsc{Patch} (sparse semantic correction), SCD bridges weight-mismatched feature spaces, truncates error propagation, and recovers Oracle-level quality with substantially lower communication overhead.

\paragraph{Limitations and Future Work.}
SCD targets shared-architecture, weight-mismatched model pairs; arbitrary cross-architecture transfer, extreme long-context regimes beyond 32K tokens, and large distribution shifts remain open.
Each pair also requires one-time calibration; future work will address these boundaries with bandwidth-adaptive ranks and broader transfer.

\section*{Impact Statement}

This work focuses on the efficiency of LLM inference, particularly in disaggregated serving scenarios. By mitigating bandwidth overheads for state transfer between heterogeneous models, our approach reduces the energy consumption and operational costs associated with large-scale AI services. This contributes to the sustainability and accessibility of AI infrastructure. We do not foresee specific negative ethical or societal consequences intrinsic to this transport protocol beyond the general risks associated with LLM deployment.

\section*{Acknowledgements}

This work was supported in part by National Natural Science Foundation of China (NSFC) under Grant 62272050 and Grant 62302048; in part by the Guangdong Key Lab of AI and Multi-modal Data Processing, Beijing Normal-Hong Kong Baptist University (BNBU), Zhuhai under 2023-2024 Grants sponsored by Guangdong Provincial Department of Education; in part by Institute of Artificial Intelligence and Future Networks (BNU-Zhuhai) and Engineering Center of AI and Future Education, Guangdong Provincial Department of Science and Technology, China; Zhuhai Science-Tech Innovation Bureau under Grant No. 2320004002772, and in part by the Interdisciplinary Intelligence SuperComputer Center of Beijing Normal University (Zhuhai).

\bibliography{example_paper}
\bibliographystyle{icml2026}

\newpage
\appendix
\onecolumn

\section{Layer Policy: Selecting Reuse vs.\ Patch}
\label{sec:layer_select}

SCD applies \textsc{Reuse} to the majority of layers to minimize transfer overhead, while enabling \textsc{Patch} on a sparse subset to recover generation quality.
Let $\mathcal{L}=\{0,\dots,L\!-\!1\}$ be the set of Transformer layers.
Let $\mathcal{S} \subseteq \mathcal{L}$ denote the set of layers where the consumer applies \textsc{Patch} (during profiling, this corresponds to restoring oracle states), with the remaining layers $\mathcal{L} \setminus \mathcal{S}$ using \textsc{Reuse}.
Let $F(\mathcal{S})$ be a task metric (e.g., F1 score or negative log-likelihood) measured under this configuration. The marginal gain over the all-\textsc{Reuse} baseline is:
\begin{equation}
\Delta F(\mathcal{S}) \triangleq F(\mathcal{S}) - F(\emptyset).
\end{equation}
Assuming a constant per-layer cost $c$ (e.g., latency or parameters) for patching, we select $\mathcal{S}$ to maximize efficiency:
\begin{equation}
\mathrm{Eff}(\mathcal{S}) \triangleq \frac{\Delta F(\mathcal{S})}{c \cdot |\mathcal{S}|}.
\label{eq:eff_method}
\end{equation}

\paragraph{Step 1: Restore-one Sensitivity (Candidate Discovery).}
Starting from the all-\textsc{Reuse} baseline ($\mathcal{S}=\emptyset$), we measure the individual marginal benefit of restoring each layer $\ell$:
\begin{equation}
\Delta F_{\ell} \triangleq F(\{\ell\}) - F(\emptyset), \qquad \ell \in \mathcal{L}.
\label{eq:restore_one_method}
\end{equation}
We construct a candidate pool $\mathcal{C}$ by selecting the top-$M$ layers ranked by $\Delta F_{\ell}$.

\paragraph{Step 2: Interaction-aware Greedy Selection.}
Single-layer sensitivity ignores inter-layer interactions. To address this, we construct $\mathcal{S}$ via forward greedy selection on the candidate pool $\mathcal{C}$:
\begin{equation}
\label{eq:greedy_method}
\ell_t = \mathop{\arg\max}_{\ell \in \mathcal{C} \setminus \mathcal{S}_{t-1}}
\Big( F(\mathcal{S}_{t-1} \cup \{\ell\}) - F(\mathcal{S}_{t-1}) \Big),
\quad
\mathcal{S}_t = \mathcal{S}_{t-1} \cup \{\ell_t\}.
\end{equation}
Unlike approaches that restrict recomputation to contiguous segments, SCD allows $\mathcal{S}_t$ to be sparse. This flexible topology enables \textsc{Patch} to act as a semantic bridge at critical transitions without requiring dense support.

\paragraph{Step 3: Budgeted Cost-aware Selection.}
Instead of fixing the budget a priori, we determine the optimal size $k^\star \in [k_{\min}, k_{\max}]$ that maximizes the efficiency metric:
\begin{equation}
k^\star = \mathop{\arg\max}_{k \in [k_{\min}, k_{\max}]} \;
\frac{\Delta F(\mathcal{S}_k)}{c \cdot k}.
\label{eq:kstar_method}
\end{equation}
The set $\mathcal{S}_{k^\star}$ defines the \textsc{Patch} policy; all other layers employ \textsc{Reuse}.

\begin{figure*}[t]
  \centering
  \includegraphics[width=0.95\linewidth]{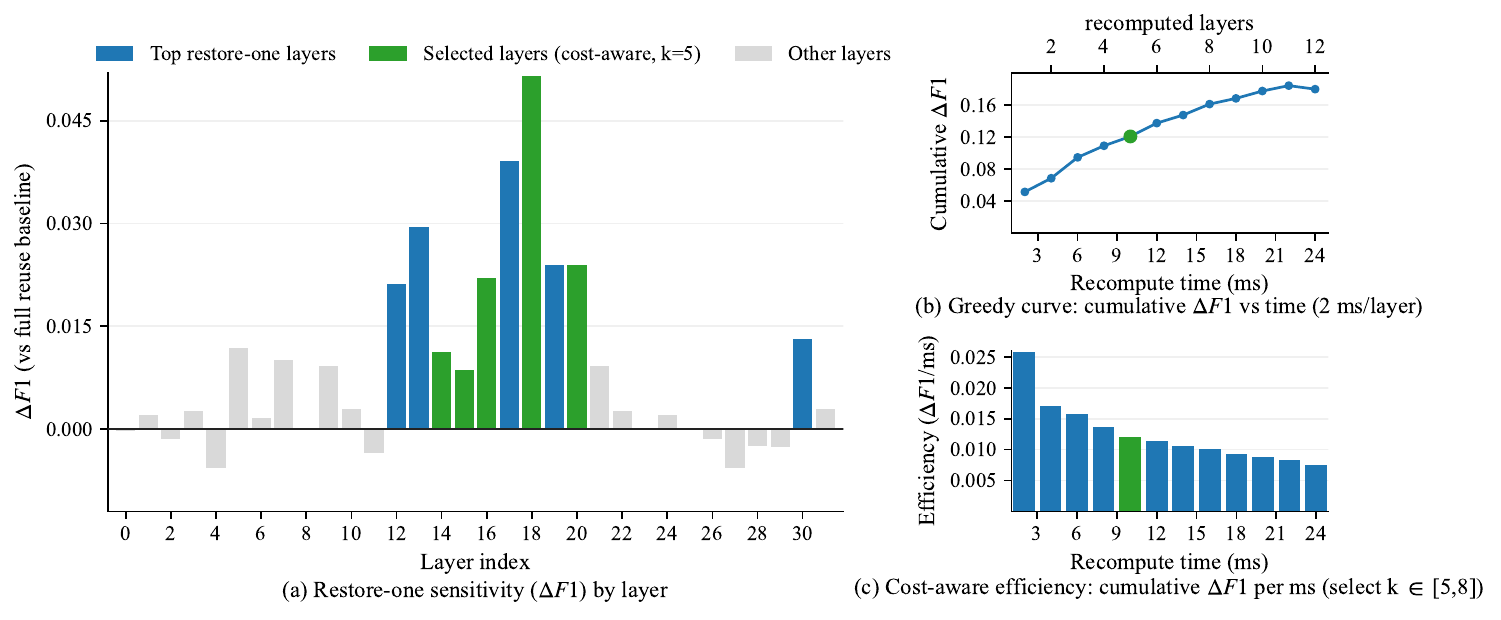}
  \caption{\textbf{Visualization of the Layer Selection Policy.} 
  \textbf{(a) Candidate Discovery:} The restore-one sensitivity profile reveals that only a few critical layers (blue) yield significant quality gains.
  \textbf{(b) Interaction Effects:} Greedy selection shows diminishing returns, indicating that restoring dense contiguous segments incurs high redundancy.
  \textbf{(c) Budget Optimization:} The efficiency curve $\Delta F(\mathcal{S}_k)/k$ peaks at a sparse budget $k^\star$ (green marker), representing the optimal trade-off between generation quality and computational cost.}
  \label{fig:layer_selection_viz}
\end{figure*}

\paragraph{Analysis of Layer Selection Dynamics.}
We validate our policy by analyzing the sensitivity and interactions of \textsc{Patch} layers, as shown in Figure~\ref{fig:layer_selection_viz}.
(a) \textbf{Sparsity:} The sensitivity profile shows a heavy-tailed distribution. A small subset of critical layers drives most of the generation quality, while the majority have negligible impact.
(b) \textbf{Redundancy:} Greedy selection shows diminishing returns. The cumulative gain plateaus as the budget increases, indicating significant redundancy among lower-ranked layers and suggesting that dense restoration is wasteful.
(c) \textbf{Efficiency:} Maximizing the gain-per-cost ratio identifies an optimal operating point at a small $k$ (green marker). This confirms our design choice to apply \textsc{Patch} as a sparse, targeted intervention rather than a contiguous block.

\section{Proofs for Section~\ref{sec:theory}}
\label{sec:theory_proofs}

This appendix provides proofs for Lemma~\ref{lem:segmented_error} and Eq.~\eqref{eq:nll_gap}, using the notation from the main text.

\subsection{Useful Inequalities for Log-Softmax}

Let $\operatorname{lse}(\mathbf{o}) \triangleq \log \sum_i \exp(o_i)$ denote the LogSumExp function. The log-softmax vector is $\log\operatorname{softmax}(\mathbf{o})_y = o_y - \operatorname{lse}(\mathbf{o})$.

\begin{lemma}[Stability of Log-Softmax]
\label{lem:logsoftmax_lip}
For any logits $\mathbf{o}, \hat{\mathbf{o}} \in \mathbb{R}^V$ and token index $y$,
\begin{equation}
\big| -\log\operatorname{softmax}(\hat{\mathbf{o}})_y + \log\operatorname{softmax}(\mathbf{o})_y \big|
\;\le\;
2 \, \| \hat{\mathbf{o}} - \mathbf{o} \|_\infty.
\label{eq:logsoftmax_lip}
\end{equation}
\end{lemma}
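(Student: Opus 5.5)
The plan is to split the log-softmax difference into a logit term and a LogSumExp term, and bound each by $\|\hat{\mathbf{o}}-\mathbf{o}\|_\infty$. By definition,
\begin{equation*}
-\log\operatorname{softmax}(\hat{\mathbf{o}})_y + \log\operatorname{softmax}(\mathbf{o})_y
= -(\hat{o}_y - o_y) + \bigl(\operatorname{lse}(\hat{\mathbf{o}}) - \operatorname{lse}(\mathbf{o})\bigr).
\end{equation*}
The triangle inequality then gives two terms. The first, $|\hat{o}_y - o_y|$, is at most $\|\hat{\mathbf{o}}-\mathbf{o}\|_\infty$ immediately.

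For the second term I will show that $\operatorname{lse}$ is $1$-Lipschitz with respect to the $\ell_\infty$ norm. Set $\delta = \|\hat{\mathbf{o}}-\mathbf{o}\|_\infty$. Then $\hat{o}_i \le o_i + \delta$ for every $i$. Since $\exp$ is monotone, $\sum_i e^{\hat{o}_i} \le e^{\delta}\sum_i e^{o_i}$. Taking logs gives $\operatorname{lse}(\hat{\mathbf{o}}) \le \operatorname{lse}(\mathbf{o}) + \delta$, and swapping the roles of $\mathbf{o}$ and $\hat{\mathbf{o}}$ gives the reverse inequality. Hence $|\operatorname{lse}(\hat{\mathbf{o}}) - \operatorname{lse}(\mathbf{o})| \le \delta$.

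As a cross-check, the mean value theorem gives the same result. The gradient of $\operatorname{lse}$ is $\operatorname{softmax}$, which lies on the probability simplex and so has $\ell_1$ norm $1$. By Hölder duality between $\ell_1$ and $\ell_\infty$, the directional change along $\hat{\mathbf{o}}-\mathbf{o}$ is therefore at most $\delta$. The monotonicity argument is preferable because it needs no differentiability bookkeeping.

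Adding the two bounds yields the constant $2$, which proves Eq.~\eqref{eq:logsoftmax_lip}. No step here is a real obstacle; the only point requiring care is the LogSumExp Lipschitz bound, which the monotonicity argument settles.

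The constant can in fact be improved. The combined expression equals $\sum_i p_i(\hat o_i - o_i) - (\hat o_y - o_y)$ with $\mathbf{p} = \operatorname{softmax}$ evaluated at an intermediate point, so it is bounded by $2\delta$. The crude triangle-inequality split already attains this, so no refinement is needed. This lemma then supplies $C_{\mathrm{sm}} = 2$ (in the $\ell_\infty$ sense) for Theorem~1, after passing from $\|\cdot\|_\infty$ to the norm used there via $\|\cdot\|_\infty \le \|\cdot\|_2$.
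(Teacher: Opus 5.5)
Your proof is correct and follows the paper's argument. You use the same split into $o_y-\hat o_y$ plus an $\operatorname{lse}$ difference, each bounded by $\|\hat{\mathbf{o}}-\mathbf{o}\|_\infty$; the paper justifies the $1$-Lipschitz property of $\operatorname{lse}$ through the softmax gradient having unit $\ell_1$ norm (your cross-check), and your monotonicity argument is a slightly more elementary route to the same bound.

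Your closing aside is muddled but does not affect the lemma. The intermediate-point expression actually gives the data-dependent refinement $2(1-p_y)\,\|\hat{\mathbf{o}}-\mathbf{o}\|_\infty$, but the uniform constant $2$ cannot be improved: take $\hat o_y-o_y=-\delta$, $\hat o_i-o_i=+\delta$ for $i\neq y$, and let $p_y\to 0$.
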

\begin{proof}
Decompose the difference as:
\[
\Delta = (o_y - \hat{o}_y) + (\operatorname{lse}(\hat{\mathbf{o}}) - \operatorname{lse}(\mathbf{o})).
\]
The first term is bounded by $\| \mathbf{o} - \hat{\mathbf{o}} \|_\infty$.
For the second term, observe that $\operatorname{lse}(\cdot)$ is 1-Lipschitz with respect to the $\|\cdot\|_\infty$ norm, since its gradient is $\operatorname{softmax}(\cdot)$, which has an $\ell_1$ norm of exactly 1. Thus,
\(
| \operatorname{lse}(\hat{\mathbf{o}}) - \operatorname{lse}(\mathbf{o}) | \le \| \hat{\mathbf{o}} - \mathbf{o} \|_\infty.
\)
Summing the bounds yields Eq.~\eqref{eq:logsoftmax_lip}.
\end{proof}

\subsection{Proof of Lemma~\ref{lem:segmented_error}}

\begin{proof}
Let the sorted patch layers be $\mathcal{L}_{\mathrm{patch}} = \{s_1 < \dots < s_m\}$, with sentinels $s_0 = 0$ and $s_{m+1} = L$.
Fix a target layer $\ell \in (s_j, s_{j+1}]$.

\paragraph{Initialization (Segment Start).}
If $j=0$, we have $\hat{\mathbf{h}}^{s_0} = \mathbf{h}^{s_0}$, so $\|\hat{\mathbf{h}}^{s_0} - \mathbf{h}^{s_0}\| = 0$.
If $j \ge 1$, by the Patch injection bound (Eq.~\eqref{eq:patch_reset}), the error is reset:
\[
\| \hat{\mathbf{h}}^{s_j} - \mathbf{h}^{s_j} \| \le \varepsilon_{\mathrm{patch}}^{s_j}.
\]

\paragraph{Unrolling (Within Segment).}
For any layer $u \in \{s_j, \dots, \ell-1\}$ utilizing Reuse, we apply the stability condition (Eq.~\eqref{eq:layer_stability}):
\[
\| \hat{\mathbf{h}}^{u+1} - \mathbf{h}^{u+1} \|
\le \alpha_u \| \hat{\mathbf{h}}^{u} - \mathbf{h}^{u} \| + \beta_u \, \varepsilon_{\mathrm{reuse}}^{u}.
\]
Unrolling this recursion from $u = s_j$ to $u = \ell-1$ yields:
\[
\| \hat{\mathbf{h}}^{\ell} - \mathbf{h}^{\ell} \|
\le
\left( \prod_{i=s_j}^{\ell-1} \alpha_i \right) \| \hat{\mathbf{h}}^{s_j} - \mathbf{h}^{s_j} \|
+
\sum_{k=s_j}^{\ell-1} \left( \prod_{m=k+1}^{\ell-1} \alpha_m \right) \beta_k \, \varepsilon_{\mathrm{reuse}}^{k}.
\]
Substituting the initialization bound $\varepsilon_{\mathrm{patch}}^{s_j}$ completes the proof.
\end{proof}

\subsection{Proof of Eq.~\eqref{eq:nll_gap} (NLL Gap)}

\begin{proof}
Assume the output head (LayerNorm + Linear) is $L_{\mathrm{out}}$-Lipschitz under the infinity norm:
\[
\| \hat{\mathbf{o}} - \mathbf{o} \|_\infty \le L_{\mathrm{out}} \| \hat{\mathbf{h}}^L - \mathbf{h}^L \|.
\]
By Lemma~\ref{lem:logsoftmax_lip}, with $\hat{\mathbf{o}} = \hat{\mathbf{o}}_B$ and $\mathbf{o} = \mathbf{o}_B$:
\[
| -\log \hat{p}(y_t) + \log p(y_t) |
\le 2 \| \hat{\mathbf{o}}_B - \mathbf{o}_B \|_\infty
\le 2 L_{\mathrm{out}} \| \hat{\mathbf{h}}^L - \mathbf{h}^L \|.
\]
This corresponds to Eq.~\eqref{eq:nll_gap} (setting $C_{\mathrm{sm}} = 2$).
Substituting the bound for $\| \hat{\mathbf{h}}^L - \mathbf{h}^L \|$ from Lemma~\ref{lem:segmented_error} yields the final result.
\end{proof}

\section{Estimating Stability Constants $\alpha_\ell$ and $\beta_\ell$}
\label{sec:inst_ab}

We provide a concrete instantiation of the stability coefficients $\alpha_\ell$ (state sensitivity) and $\beta_\ell$ (cache sensitivity).
While standard attention is not globally Lipschitz, we utilize \emph{local} Lipschitz constants estimated on a bounded domain defined by the calibration set, consistent with prior literature~\citep{castin2023smooth,yudin2025pay}.

\subsection{Layer Map and Perturbation}
\label{sec:layer_map}

Consider a pre-LN decoder layer (omitting the layer index $\ell$ for brevity). The forward pass is defined as:
\begin{align}
\mathbf{u} &= \operatorname{LN}(\mathbf{h}), \\
\mathbf{Q} &= \mathbf{u} \mathbf{W}_Q, \quad \mathbf{K}, \mathbf{V} \in \mathbb{R}^{T \times d_h}, \\
\mathbf{A} &= \operatorname{softmax}\left( \frac{\mathbf{Q} \mathbf{K}^\top}{\sqrt{d_h}} + \mathbf{M} \right), \\
\operatorname{Attn}(\mathbf{h}; \mathbf{K}, \mathbf{V}) &= (\mathbf{A} \mathbf{V}) \mathbf{W}_O.
\end{align}
Our goal is to bound the output perturbation $\| \delta \mathbf{h}^{\ell+1} \|$ in terms of the input error $\| \delta \mathbf{h}^\ell \|$ and cache errors $(\| \delta \mathbf{K} \|, \| \delta \mathbf{V} \|)$.

\subsection{Component Bounds}
\label{sec:ingredients}

\paragraph{Softmax Jacobian.}
For the row-wise softmax function, the spectral norm of the Jacobian is bounded by $1/2$. Locally, we have:
\begin{equation}
\| \delta \mathbf{A} \|_2 \le \frac{1}{2} \| \delta \mathbf{S} \|_2,
\end{equation}
where $\mathbf{S}$ denotes the pre-softmax scores.

\paragraph{LayerNorm.}
We denote the local Lipschitz constant of LayerNorm as $L_{\mathrm{LN}}^\ell$. We estimate this empirically based on the minimum per-token variance observed in the calibration traces.

\subsection{Bounding Attention and MLP}
\label{sec:attn_bound}

By expanding the differential of the attention mechanism and applying triangle inequalities (following the derivation in \citet{kim2021lipschitz}), we derive the aggregate stability coefficients:
\begin{align}
\alpha_\ell
&:= 1 + L_{\mathrm{attn}, h}^\ell + L_{\mathrm{mlp}}^\ell, \\
\beta_\ell
&:= L_{\mathrm{attn}, K}^\ell + L_{\mathrm{attn}, V}^\ell.
\end{align}
The term $\alpha_\ell$ includes $1$ due to the residual connection.
The cache sensitivity terms are defined as:
\begin{itemize}
    \item $L_{\mathrm{attn}, K}^\ell$: Captures sensitivity to Key perturbations, mediated by the query magnitude $\|\mathbf{Q}\|_2$.
    \item $L_{\mathrm{attn}, V}^\ell$: Captures sensitivity to Value perturbations, mediated by the attention probability norm $\|\mathbf{A}\|_2$.
\end{itemize}

\subsection{Practical Estimation}
\label{sec:ab_practical}

We estimate these constants using statistics collected from the calibration set:
\begin{itemize}
    \item \textbf{Weights:} We compute spectral norms $\|\mathbf{W}\|_2$ via power iteration.
    \item \textbf{Activations:} We use the empirical maxima of $\|\mathbf{K}\|_2, \|\mathbf{V}\|_2, \|\mathbf{Q}\|_2$ over the traces to bound the local domain.
    \item \textbf{Softmax:} We use the theoretical bound $L_{\mathrm{sm}} = 1/2$.
\end{itemize}
This procedure yields tight layer-wise coefficients $(\alpha_\ell, \beta_\ell)$ that accurately predict the error amplification observed in our experiments.

\section{Additional Visualizations}
\label{sec:appendix_viz}

\paragraph{Visualization of Representation Alignment.}
Figure~\ref{fig:tsne_kv_align} visualizes KV representations using t-SNE to assess alignment quality.
We compare source (producer, blue), target (consumer, gray), and translated (orange) states.
The raw source distribution is clearly separated from the target, indicating a significant feature gap.
In contrast, the translated states align closely with the target clusters.
This demonstrates that our method effectively maps producer states into the consumer's semantic space.

\begin{figure}[H]
  \centering
  \includegraphics[width=0.85\linewidth]{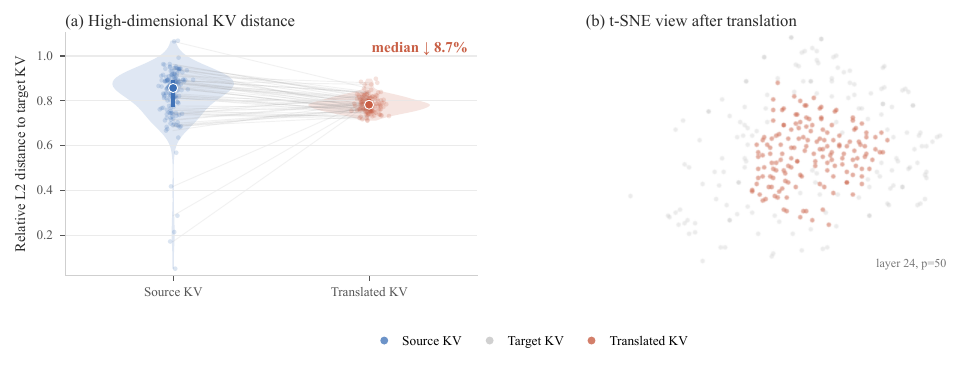}   
  \caption{\textbf{t-SNE visualization of cross-model KV alignment.} 
  Blue, gray, and orange points represent source, target, and translated KV states, respectively.
  The translated states move closer to the target distribution, illustrating the reduction of the representational gap.}
  \label{fig:tsne_kv_align}
\end{figure}

\section{Neighboring Compression Baselines}
\label{sec:neighboring_baselines}

We additionally adapt representative neighboring compression methods, including SVD-LLM~\citep{wang2025svdllm}, SliceGPT~\citep{ashkboos2024slicegpt}, and TensorLLM~\citep{gu2025tensorllm}, to the heterogeneous state-transfer setting.
These methods primarily target model or tensor compression rather than producer-to-consumer KV transfer under weight mismatch, so we report them in the appendix instead of the main comparison.
Table~\ref{tab:neighboring_baselines} shows that generic compression alone does not resolve cross-model semantic misalignment.

\begin{table}[H]
  \centering
  \small
  \caption{\textbf{Adapted Neighboring Compression Baselines.}
  Quality retention and TTFT speedup are measured under the same heterogeneous transfer setting. Values above $1.0\times$ indicate speedup over the oracle consumer prefill; values below $1.0\times$ indicate slowdown.}
  \label{tab:neighboring_baselines}
  \setlength{\tabcolsep}{10pt}
  \begin{tabular}{lcc}
    \toprule
    \textbf{Method} & \textbf{Quality Retention} & \textbf{TTFT Speedup} \\
    \midrule
    SVD-LLM-style adaptation & 70.6\% & 0.83$\times$ \\
    SliceGPT-style adapted KV transfer & 85.0\% & 0.95$\times$ \\
    TensorLLM-style adapted KV transfer & 0.0\% & 0.16$\times$ \\
    \midrule
    \textbf{SCD} & 95--97\% & 1.98--2.65$\times$ \\
    \bottomrule
  \end{tabular}
\end{table}

The results support the distinction between compression and semantic transfer.
SVD-LLM-style and SliceGPT-style adaptations retain part of the consumer behavior but do not provide a latency benefit in this split-inference setting.
TensorLLM-style adaptation is more aggressive but collapses quality under weight mismatch.
SCD instead learns producer-to-consumer translators and applies Patch at sparse transition layers, preserving most oracle quality while improving TTFT.

\section{Extended Implementation Details}
\label{sec:details}

We divide our implementation into two phases: an \textbf{Offline Calibration Phase}, where we learn layer-wise parameters to bridge model discrepancies; and an \textbf{Online Execution Phase}, where the target model ($\mathcal{M}_B$) uses these parameters to populate its KV cache, regenerating native KV pairs at selected patch layers. A core principle is to follow the \textit{native KV-cache semantics} of the inference framework (e.g., Hugging Face Transformers), ensuring that reconstructed states are treated as standard cached history during generation.

\subsection{Offline Calibration and Parameter Training}

\paragraph{Reuse: Layer-wise Stack-SVD with Ridge Regression.}
We implement \textsc{Reuse} as a stack of independent linear translators. For each layer $\ell$, we execute full prefill passes on both the source ($\mathcal{M}_A$) and target ($\mathcal{M}_B$) models over a calibration dataset. We collect the internal representations at layer $\ell$ that serve as inputs to the KV cache. The optimization proceeds in two steps:

\begin{enumerate}
    \setlength{\itemsep}{1pt}
    \setlength{\parskip}{0pt}
    \setlength{\topsep}{2pt}
    \item \textbf{Subspace Identification via Truncated SVD:}
    We first apply Truncated Singular Value Decomposition (SVD) to identify a rank-$r$ latent subspace. By the Eckart--Young--Mirsky theorem, this yields the optimal low-rank approximation of the feature map in the Frobenius norm, minimizing information loss.

    \item \textbf{Mapping Fit via Ridge Regression:}
    We fit the linear encoder/decoder mappings using Ridge regression ($\ell_2$ regularization). We choose Ridge over unregularized least squares to improve numerical stability and prevent noise amplification from small singular values.
\end{enumerate}

\paragraph{Patch: Sparse Breakpoint Alignment.}
We train the \textsc{Patch} mechanism only on a sparse subset of transition layers $\ell \in \mathcal{L}_{\text{patch}}$. For each layer, we collect the "pre-attention breakpoint state" (the state immediately preceding the self-attention projection). We train an aligner $g^\ell$ to map the low-dimensional code from $\mathcal{M}_A$ to an estimate of $\mathcal{M}_B$'s state.
The training objective is a standard regression loss (e.g., MSE). To improve consistency, we can optionally include auxiliary terms matching intermediate representations, similar to feature matching in Knowledge Distillation.
The patch set $\mathcal{L}_{\text{patch}}$ is selected offline under the budget $k$ and is loaded as a fixed route at serving time.

The final offline artifacts are:
\begin{itemize}
    \setlength{\itemsep}{1pt}
    \setlength{\parskip}{0pt}
    \setlength{\topsep}{2pt}
    \item \textbf{Reuse Artifacts:} Linear encoder/decoder parameters for all layers.
    \item \textbf{Patch Artifacts:} Breakpoint aligners $g^\ell$ for the subset $\mathcal{L}_{\text{patch}}$.
\end{itemize}
These are serialized with minimal metadata for loading at runtime.

\begin{table}[H]
  \centering
  \footnotesize
  \setlength{\abovecaptionskip}{2pt}
  \setlength{\belowcaptionskip}{2pt}
  \caption{\textbf{Offline Calibration Cost.}
  One-time cost for MistralLite$\to$Mistral-7B using 200 CMRC2018 prefixes on a single GPU.}
  \label{tab:offline_cost}
  \renewcommand{\arraystretch}{0.9}
  \begin{tabular}{lrr}
    \toprule
    \textbf{Stage} & \textbf{Time} & \textbf{Peak Mem.} \\
    & (s) & (GB) \\
    \midrule
    Paired trace collection & 1,068.9 & 18.1 \\
    Reuse fitting & 905.5 & 3.0 \\
    Layer-wise profiling & 2,622.2 & 9.4 \\
    Patch aligner training & 30,456.9 & 7.7 \\
    Patch resharding & 11.9 & -- \\
    \midrule
    \textbf{Total} & \textbf{9.74 h} & \textbf{18.1} \\
    \bottomrule
  \end{tabular}
\end{table}

\subsection{Online Execution and Hybrid Cache Construction}

\paragraph{Online Reuse: Transmission and Reconstruction.}
During inference, $\mathcal{M}_A$ performs a standard prefill and transmits low-dimensional codes for each reused layer $\ell$. $\mathcal{M}_B$ uses the corresponding decoder to reconstruct the tensor in its own feature space and writes it directly into its cache container (e.g., \texttt{past\_key\_values}).
Once the cache is populated, $\mathcal{M}_B$ skips its local prefill and enters the standard decoding loop. The framework handles cache indices (e.g., \texttt{cache\_position}) natively, treating the reconstructed cache as valid local history.

\paragraph{Online Patch: Alignment and Recomputation.}
When the \textsc{Patch} mechanism is applied at a selected transition layer $\ell$, the execution flow is:
\begin{enumerate}
    \setlength{\itemsep}{1pt}
    \setlength{\parskip}{0pt}
    \setlength{\topsep}{1pt}
    \item \textbf{State Alignment:} $\mathcal{M}_B$ uses the aligner $g^\ell$ to map the incoming code to an estimated normalized pre-attention state in the consumer space.
    \item \textbf{Native KV Regeneration:} $\mathcal{M}_B$ multiplies this aligned state by its own $W_{k,B}^{\ell}$ and $W_{v,B}^{\ell}$, and applies RoPE to the key, producing consumer-native KV pairs for layer $\ell$.
    \item \textbf{Cache Assembly:} The final cache is a hybrid: layers in $\mathcal{L}_{\mathrm{reuse}}$ use reconstructed KV pairs, and layers in $\mathcal{L}_{\mathrm{patch}}$ use patch-regenerated native KV pairs.
\end{enumerate}

\end{document}